\newcommand{\chk}{{\centering\checkmark}}
\newcommand{\xmark}{\ding{55}}
\newcommand{\crs}{{\centering\xmark}}
\def\eqref#1{(\ref{#1})}
\def\1{\bm{1}}
\def\rw{{w}}
\DeclareMathAlphabet{\mathsfit}{\encodingdefault}{\sfdefault}{m}{sl}
\SetMathAlphabet{\mathsfit}{bold}{\encodingdefault}{\sfdefault}{bx}{n}
\def\gD{{\mathcal{D}}}
\newcommand{\E}{\mathbb{E}}
\newcommand{\R}{\mathbb{R}}
\DeclareMathOperator*{\argmin}{arg\,min}
\newtheorem{lemm}{Lemma}
\newtheorem{coro}{Corollary}
\newcommand{\nt}{n}
\newcommand{\supp}{l}
\definecolor{green3}{rgb}{0,0.6,0}
\title{Empirical Bayes Transductive Meta-Learning with Synthetic Gradients}
\author{
  Shell Xu Hu$^1$ \,\, Pablo G. Moreno$^2$ \,\, Yang Xiao$^1$ \,\, Xi Shen$^1$ \\
  \hspace{0.03mm} \textbf{Guillaume Obozinski$^3$ \,\, Neil D. Lawrence$^4$ \,\, Andreas Damianou$^2$}\\
    \And
  $^1$École des Ponts ParisTech \\
  Champs-sur-Marne, France \\
  \texttt{\{xu.hu, yang.xiao, xi.shen\}@enpc.fr} 
    \And
  $^2$Amazon \\
  Cambridge, United Kingdom \\
  \texttt{\{morepabl, damianou\}@amazon.com}\\
    \And
  $^3$Swiss Data Science Center \\
  Lausanne, Switzerland \\
  \texttt{guillaume.obozinski@epfl.ch}
    \And
  $^4$University of Cambridge \\
  Cambridge, United Kingdom \\
  \texttt{ndl21@cam.ac.uk}
}
\begin{document}

\maketitle

\begin{abstract}
    We propose a meta-learning approach that learns from multiple tasks in a transductive setting, by leveraging the unlabeled query set in addition to the support set to generate a more powerful model for each task. To develop our framework, we revisit the empirical Bayes formulation for multi-task learning. The evidence lower bound of the marginal log-likelihood of empirical Bayes decomposes as a sum of local KL divergences between the variational posterior and the true posterior on the query set of each task.
    We derive a novel amortized variational inference that couples all the variational posteriors via a meta-model, which consists of a synthetic gradient network and an initialization network. Each variational posterior is derived from synthetic gradient descent to approximate the true posterior on the query set, although where we do not have access to the true gradient.  
    Our results on the Mini-ImageNet and CIFAR-FS benchmarks for episodic few-shot classification outperform previous state-of-the-art methods.
    Besides, we conduct two zero-shot learning experiments to further explore the potential of the synthetic gradient.
\end{abstract}

\section{Introduction}

While supervised learning of deep neural networks can achieve or even surpass human-level performance \citep{he2015delving,devlin2018bert},
they can hardly extrapolate the learned knowledge beyond the domain where the supervision is provided.
The problem of solving rapidly a new task
after learning several other similar tasks is called \emph{meta-learning} \citep{schmidhuber1987,bengio1991learning,thrun1998learning}; typically, the data is presented in a two-level hierarchy such that each data point at the higher level is itself a dataset associated with a task,
and the goal is to learn a \emph{meta-model} that generalizes across tasks.
In this paper, we mainly focus on \emph{few-shot learning} \citep{vinyals2016matching}, %
an instance of meta-learning problems,
where a task $t$ consists of 
a \emph{query set} ${\displaystyle d_t := \{(x_{t,i}, y_{t,i})\}_{i=1}^{n}}$ %
serving as the test-set of the task
and a \emph{support set} ${\scriptstyle d_t^{l} := \{(x_{t,i}^{l}, y_{t,i}^{l})\}_{i=1}^{\nt^l}}$ serving as the train-set.
In \emph{meta-testing}\footnote{To distinguish from testing and training within a task, meta-testing and meta-training are referred to as testing and training over tasks.}, 
one is given the support set and the inputs of the query set $x_t := \{x_{t,i}\}_{i=1}^{\nt}$, and asked to predict the labels $y_t := \{ y_{t,i} \}_{i=1}^{\nt}$.
In \emph{meta-training}, $y_t$ is provided as the ground truth.
The setup of few-shot learning is summarized in Table~\ref{tab:fs}.

\begin{table}[ht]
    \centering
	\begin{tabular}{lccc}
    \toprule
        & \multicolumn{1}{c}{\textbf{Support set}} & \multicolumn{2}{c}{\textbf{Query set}} \\
        & $d_t^{l} := \{(x_{t,i}^{l}, y_{t,i}^{l})\}_{i=1}^{\nt^l}$ & $x_t := \{x_{t,i}\}_{i=1}^{\nt}$ & $y_t = \{ y_{t,i} \}_{i=1}^{\nt}$ \\
    \midrule
	    Meta-training & \chk & \chk & \chk \\
	    Meta-testing & \chk & \chk & \crs \\
	\bottomrule
    \end{tabular}
    \caption{The setup of few-shot learning. If task $t$ is used for meta-testing, $y_t$ is not given to the model.}
\label{tab:fs}
\end{table}

A important distinction to make is whether a task is solved in a \emph{transductive} or \emph{inductive} manner, 
that is, whether $x_t$ is used. The inductive setting is what was originally proposed by \citet{vinyals2016matching},
in which only $d_t^l$ is used to generate a model.
The transductive setting, as an alternative, has the advantage of being able to see partial or all points in $x_t$ before making predictions.
In fact, \citet{nichol2018first} notice that most of the existing meta-learning methods involve transduction unintentionally
since they use $x_t$ implicitly via the \emph{batch normalization} \citep{ioffe2015batch}.
Explicit transduction is less explored in meta-learning, the exception is \citet{liu2018learning}, 
who adapted the idea of label propagation \citep{zhu2003semi} from graph-based semi-supervised learning methods.
We take a totally different path that meta-learn the ``gradient'' descent on $x_t$ without using $y_t$.

Due to the hierarchical structure of the data, it is natural to formulate meta-learning by 
a \emph{hierarchical Bayes} (HB) model \citep{good1980some,berger1985}, 
or alternatively, an empirical Bayes (EB) model \citep{robbins1985empirical,kucukelbir2014population}. 
The difference is that the latter restricts the learning of meta-parameters to point estimates.
In this paper, we focus on the EB model, as it largely simplifies the training and testing without losing the strength of the HB formulation.

The idea of using HB or EB for meta-learning is not new: \citet{amit2018meta} derive an objective similar to that of HB 
using PAC-Bayesian analysis; \citet{grant2018recasting} show that MAML \citep{finn2017model} can be understood as a EB method;
\citet{ravi2018amortized} consider a HB extension to MAML and compute posteriors via amortized variational inference.
However, unlike our proposal, these methods do not make full use of the unlabeled data in query set. 
Roughly speaking, they construct the variational posterior %
as a function of the labeled set $d_t^\supp$ without taking advantage of the unlabeled set $x_t$. 
The situation is similar in gradient based meta-learning methods 
\citep{finn2017model,ravi2016optimization,li2017meta,nichol2018first,flennerhag2019transferring} 
and many other meta-learning methods \citep{vinyals2016matching,snell2017prototypical,gidaris2018dynamic}, 
where the mechanisms used to generate the task-specific parameters 
rely on groundtruth labels, thus, there is no place for the unlabeled set to contribute. 
We argue that this is a suboptimal choice,
which may lead to overfitting when the labeled set is small
and hinder the possibility of zero-shot learning (when the labeled set is not provided).

In this paper, we propose to use synthetic gradient \citep{jaderberg2017decoupled} 
to enable transduction, such that the variational posterior %
is implemented as a function of the labeled set $d_t^\supp$ and the unlabeled set $x_t$.
The synthetic gradient is produced by chaining the output of a gradient network into auto-differentiation, 
which yields a surrogate of the inaccessible true gradient.
The optimization process is similar to the inner gradient descent in MAML, but it iterates on the unlabeled $x_t$ rather than on the labeled $d_t^\supp$,
since it does not rely on $y_t$ to compute the true gradient.
The labeled set for generating the model for an unseen task is now optional, which is only used to compute the initialization of model weights in our case. %
In summary, our main contributions are the following:   
\begin{enumerate}
    \item In section~\ref{sec:model} and section~\ref{sec:alg}, 
        we develop a novel empirical Bayes formulation with transduction for meta-learning.
        To perform amortized variational inference, we propose a parameterization for the variational posterior 
        based on synthetic gradient descent, which incoporates the contextual information from all the inputs of the query set.
    \item In section~\ref{sec:theory}, we show in theory that a transductive variational posterior yields better generalization performance.
        The generalization analysis is done through the connection between empirical Bayes formulation and
        a multitask extension of the information bottleneck principle.
        In light of this, we name our method \emph{synthetic information bottleneck} (SIB).
    \item In section~\ref{sec:exp}, we verify our proposal empirically. 
        Our experimental results demonstrate that our method significantly outperforms the state-of-the-art meta-learning methods 
        on few-shot classification benchmarks under the one-shot setting. 
\end{enumerate}

\begin{figure*}[t]
	\centering
	\begin{minipage}{.28\textwidth}
		\centering
		\includegraphics[width=0.9\linewidth]{./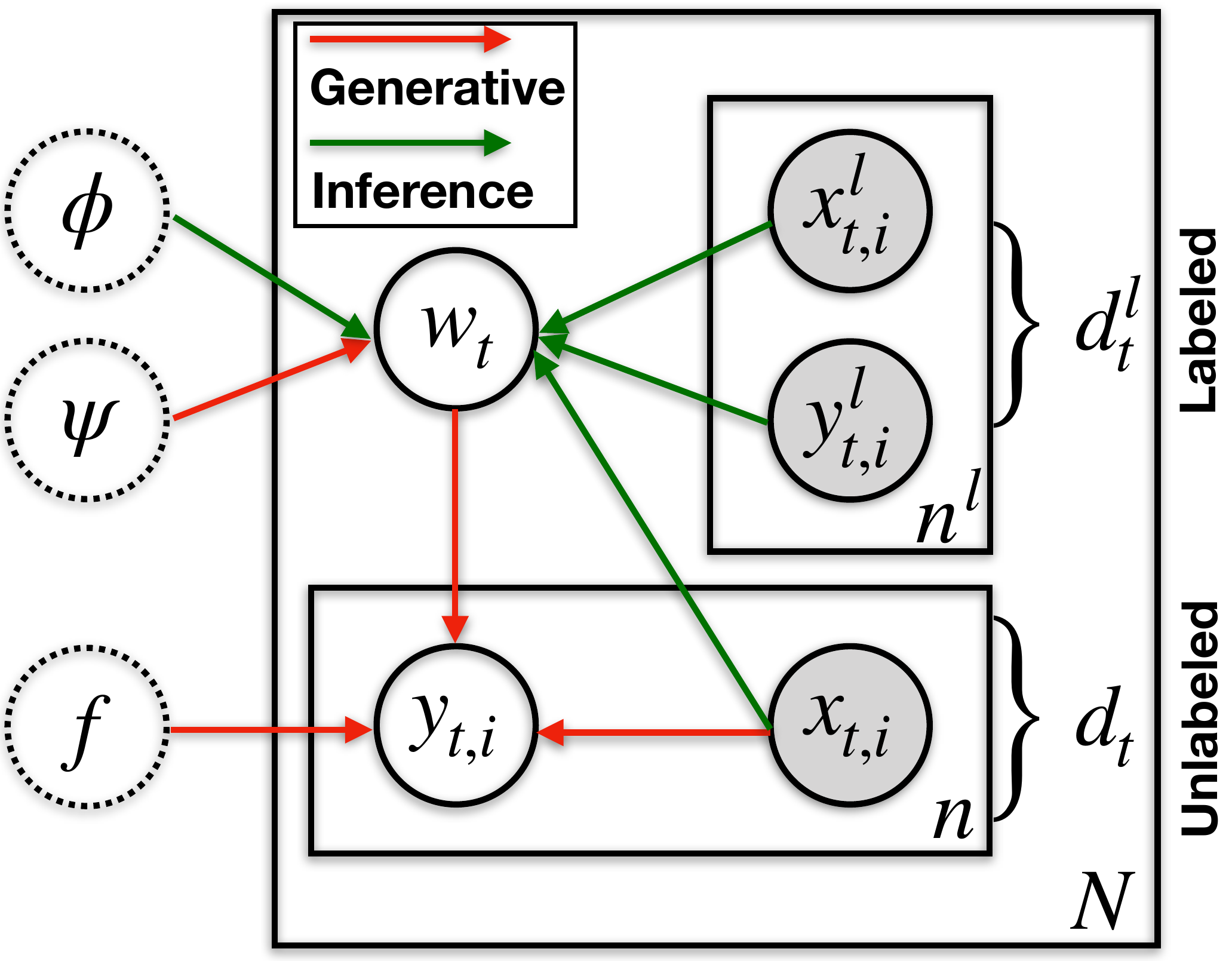}\\
        (a) Graphical model of EB
	\end{minipage}
	\hfill
	\begin{minipage}{.35\textwidth}
		\centering
		\includegraphics[width=1.0\linewidth]{./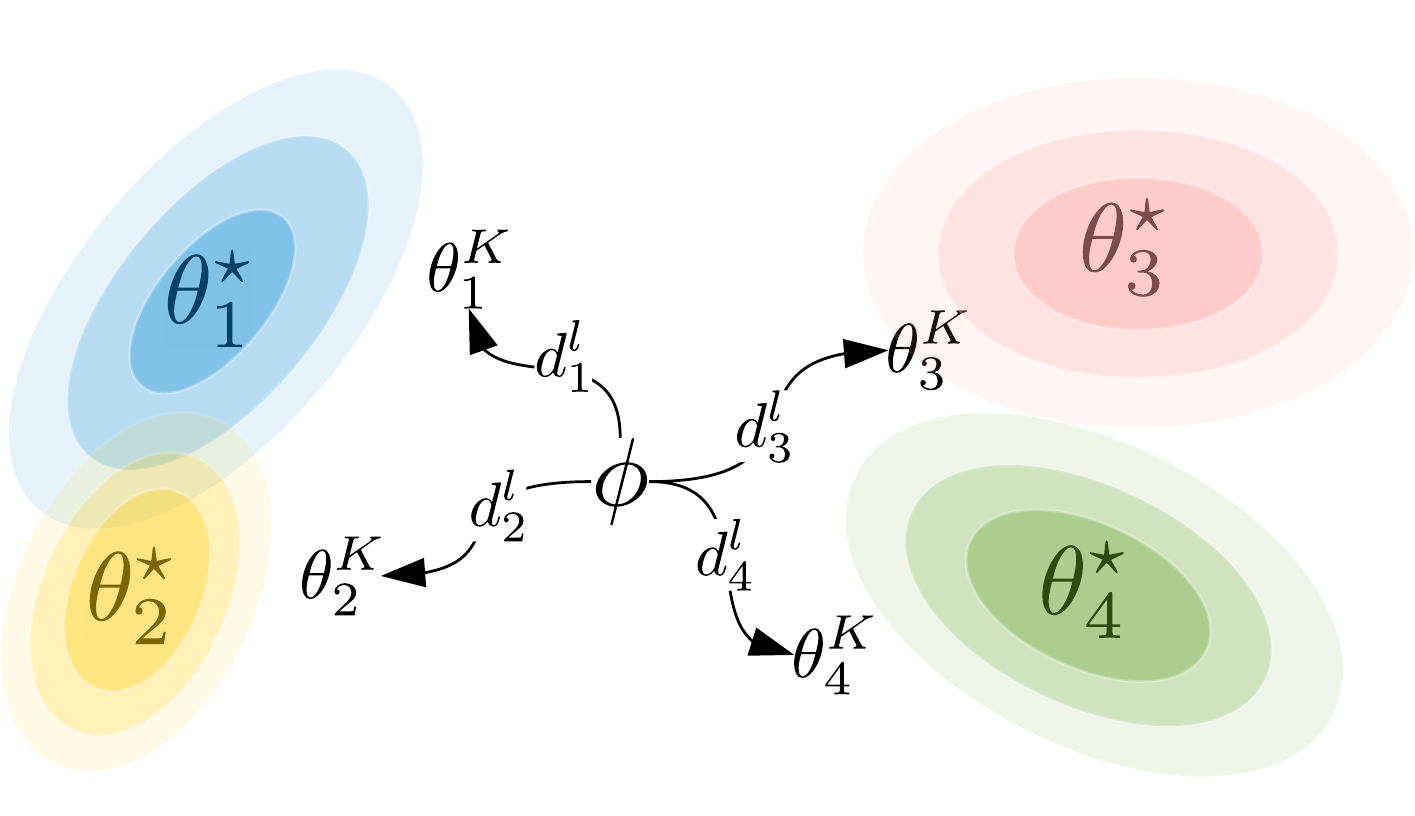}\\
        (b) MAML
	\end{minipage} 
	\hfill
	\begin{minipage}{.35\textwidth}
		\centering
		\includegraphics[width=1.0\linewidth]{./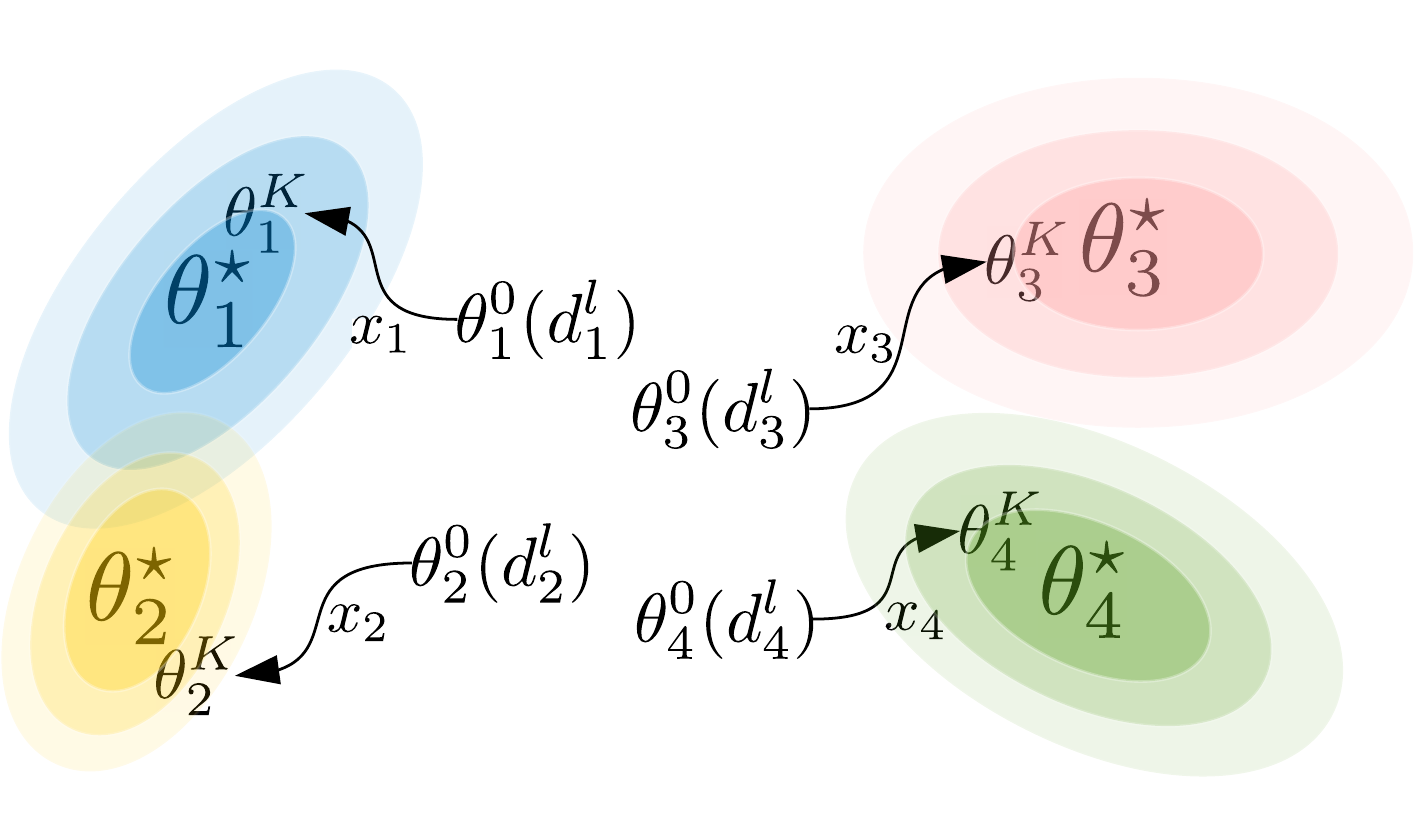}\\
        (c) Our method (SIB)
	\end{minipage}
    \caption{\textbf{(a)} The generative and inference processes of the empirical Bayes model 
    are depicted in solid and dashed arrows respectively, 
    where the meta-parameters are denoted by dashed circles due to the point estimates.
    A comparison between MAML \eqref{eq:maml} and our method (SIB) \eqref{eq:sib} is shown in \textbf{(b)} and \textbf{(c)}.
    MAML is an inductive method since, for a task $t$, it first constructs the variational posterior (with parameter $\theta^K$) as a function of the support set $d_t^\supp$,
    and then test on the unlabeled $x_t$; while SIB uses a better variational posterior as a function of both $d_t^\supp$ and $x_t$:
    it starts from an initialization $\theta_t^0(d_t^\supp)$ generated using $d_t^\supp$, and then yields $\theta_t^K$ by running $K$ synthetic gradient steps on $x_t$.
    }
	\label{fig:intro}
\end{figure*}

\section{Meta-learning with transductive inference}
\label{sec:model}

The goal of meta-learning is to train a \emph{meta-model} on a collection of tasks,
such that it works well on another disjoint collection of tasks.
Suppose that we are given a collection of $N$ tasks for training. 
The associated data is denoted by $\gD := \{d_t := (x_t, y_t) \}_{t=1}^N$. 
In the case of few-shot learning, 
we are given in addition a support set $d_t^\supp$ in each task. 
In this section, we revisit the classical empirical Bayes model for meta-learning. Then, 
we propose to use a transductive scheme in the variational inference by implementing the variational posterior as a function of $x_t$.

\subsection{Empirical Bayes model}

Due to the hierarchical structure among data, it is natural to 
consider a hierarchical Bayes model with the marginal likelihood
\begin{align}
    p_f(\gD) = \int_\psi p_f(\gD | \psi) p(\psi)
    = \int_\psi \Big[ \prod_{t=1}^N \int_{w_t} p_f(d_t| w_t) p(w_t | \psi) \Big] p(\psi).
\end{align}
The generative process is illustrated in Figure~\ref{fig:intro} (a, in red arrows):
first, a \emph{meta-parameter} $\psi$ (i.e., hyper-parameter) is sampled from the \emph{hyper-prior} $p(\psi)$;
then, for each task, a \emph{task-specific parameter} $w_t$ is sampled from the \emph{prior} $p(w_t | \psi)$;
finally, the dataset is drawn from the \emph{likelihood} $p_f(d_t | w_t)$. %
Without loss of generality, we assume the log-likelihood model factorizes as 
\begin{align}
    \label{eq:lik}
    \log p_f(d_t | w_t) &= \sum_{i=1}^{\nt} \log p_f(y_{t,i} | x_{t,i}, w_t) + \log p(x_{t,i} | w_t), \notag \\
    &= \sum_{i=1}^{\nt} -\frac{1}{n} \ell_t\big( \hat{y}_{t,i}(f(x_{t,i}), w_t), y_{t,i} \big) + \,\text{constant}.
\end{align}
It is the setting advocated by \citet{minka2005discriminative}, 
in which the generative model $p(x_{t,i} | w_t)$ can be safely ignored since 
it is irrelevant to the prediction of $y_t$.
To simplify the presentation, 
we still keep the notation $p_f(d_t | w_t)$ for the likelihood of the task $t$ and 
use $\ell_t$ to specify the discriminative model, which is also referred to as
the \emph{task-specific loss}, e.g., the cross entropy loss. %
The first argument in $\ell_t$ is the prediction, denoted by $\hat{y}_{t,i} = \hat{y}_{t,i}(f(x_{t,i}), w_t)$, 
which depends on the \emph{feature representation} $f(x_{t,i})$ and the \emph{task-specific weight} $w_t$. 

Note that rather than following a fully Bayesian approach, 
we leave some random variables to be estimated in a frequentist way, 
e.g., $f$ is a \emph{meta-parameter} of the likelihood model shared by all tasks, 
for which we use a point estimate.
As such, the posterior inference about these variables will be largely simplified.
For the same reason, we derive the \emph{empirical Bayes} \mbox{\citep{robbins1985empirical,kucukelbir2014population}}
by taking a point estimate on $\psi$. The marginal likelihood now reads as
\begin{align}
    \label{eq:eb}
    p_{\psi, f}(\gD) %
    = \prod_{t=1}^N \int_{w_t} p_f(d_t| w_t) p_\psi(w_t).
\end{align}
We highlight the meta-parameters as subscripts of the corresponding distributions to distinguish from random variables.
Indeed, we are not the first to formulate meta-learning as empirical Bayes.
The overall model formulation is essentially the same as the ones considered by 
\citet{amit2018meta,grant2018recasting,ravi2018amortized}. 
Our contribution lies in the variational inference for empirical Bayes.

\subsection{Amortized inference with transduction}
\label{sec:avi}

As in standard probabilistic modeling, 
we derive an \emph{evidence lower bound} (ELBO) on the log version of \eqref{eq:eb}
by introducing a variational distribution $q_{\theta_t}(w_t)$ for each task with parameter $\theta_t$: 
\begin{align}
    \label{eq:vae}
    \log p_{\psi, f}(\gD) %
    &\geq \sum_{t=1}^N \Big[ \E_{w_t \sim q_{\theta_t}} \big[ \log p_f(d_t | w_t) \big] - D_\text{KL}\big(q_{\theta_t}(w_t) \| p_\psi(w_t) \big) \Big].
\end{align}
The variational inference amounts to maximizing the ELBO with respect to $\theta_1, \ldots, \theta_N$, which 
together with the maximum likelihood estimation of the meta-parameters, we have the following optimization problem:
\begin{align}
    \label{eq:vi}
    \min_{\psi,f} \min_{\theta_1, \ldots, \theta_N} \frac{1}{N} \sum_{t=1}^N \Big[ \E_{w_t \sim q_{\theta_t}} \big[ -\log p_f(d_t | w_t) \big] + D_\text{KL}\big(q_{\theta_t}(w_t) \| p_\psi(w_t) \big) \Big]. 
\end{align}
However, the optimization in \eqref{eq:vi}, as $N$ increases, becomes more and more expensive in terms of the memory footprint and the computational cost.
We therefore wish to bypass this heavy optimization
and to take advantage of the fact that
individual KL terms indeed share the same structure.
To this end, instead of introducing $N$ different variational distributions, 
we consider a parameterized family of distributions in the form of $q_{\phi(\cdot)}$, 
which is defined implicitly by a deep neural network $\phi$ taking as input either $d_t^l$ or $d_t^l$ plus $x_t$,
that is, $q_{\phi(d_t^l)}$ or $q_{\phi(d_t^l, x_t)}$.
Note that we cannot use entire $d_t$,
since we do not have access to $y_t$ during meta-testing.
This amortization technique was first introduced in the case of \emph{variational autoencoders} \citep{kingma2013auto,rezende2014stochastic},
and has been extended to Bayesian inference in the case of \emph{neural processes} \citep{garnelo2018neural}.

Since $d_t^\supp$ and $x_t$ are disjoint, %
the inference scheme is \emph{inductive} for a variational posterior $q_{\phi(d_t^\supp)}$.
As an example, MAML \citep{finn2017model} takes $q_{\phi(d_t^\supp)}$ as the Dirac delta distribution,
where $\phi(d_t^\supp) = \theta_t^K$, is the $K$-th iterate of the stochastic gradient descent
\begin{align}
    \label{eq:maml}
    \theta_t^{k+1} = \theta_t^k + \eta \, \nabla_{\theta} \E_{w_t \sim q_{\theta_t^k}}\Big[ \log p(d_t^\supp | w_t) \Big] \,\,\text{with}\,\,\theta_t^0 = \phi, \,\,\text{a learnable initialization}.
\end{align}

In this work, we consider a \emph{transductive} inference scheme with variational posterior $q_{\phi(d_t^l, x_t)}$.
The inference process is shown in Figure~\ref{fig:intro}(a, in green arrows).
Replacing each $q_{\theta_t}$ in \eqref{eq:vi} by $q_{\phi(d_t^l, x_t)}$,
the optimization problem becomes
\begin{align}
    \label{eq:avi}
    \min_{\psi,f} \min_{\phi} \frac{1}{N} \sum_{t=1}^N \Big[ \E_{w_t \sim q_{\phi(d_t^l, x_t)}} \big[ -\log p_f(d_t | w_t) \big] + D_\text{KL}\big(q_{\phi(d_t^l, x_t)}(w_t) \| p_\psi(w_t) \big) \Big].
\end{align}
In a nutshell, the meta-model to be optimized includes the feature network $f$, the hyper-parameter $\psi$ from the empirical Bayes formulation and the amortization network $\phi$ from the variational inference.

\section{Unrolling exact inference with synthetic gradients}
\label{sec:alg}

It is however non-trivial to design a proper network architecture for $\phi(d_t^l, x_t)$, since $d_t^l$ and $x_t$ are both sets.
The strategy adopted by neural processes \citep{garnelo2018neural} is to aggregate the information from all individual examples via an averaging function.
However, as pointed out by \citet{kim2019attentive}, such a strategy tends to underfit $x_t$ 
because the aggregation does not necessarily attain the most relevant information for identifying the task-specific parameter.
Extensions, such as attentive neural process \citep{kim2019attentive} and set transformer \citep{lee2019set}, may alleviate this issue but come at a price of $O(n^2)$ time complexity.
We instead %
design $\phi(d_t^l, x_t)$ to mimic the exact inference $\argmin_{\theta_t} D_\text{KL}(q_{\theta_t}(w_t) \| p_{\psi, f}(w_t | d_t))$
by parameterizing the optimization process with respect to $\theta_t$. More specifically,
consider the gradient descent on $\theta_t$ with step size $\eta$:
\begin{align}
    \label{eq:gd}
    \theta_t^{k+1} = \theta_t^{k} - \eta \, \nabla_{\theta_t} D_\text{KL}\Big( q_{\theta_t^{k}}(w) \,\|\, p_{\psi, f}(w \,|\, d_t) \Big).
\end{align}
We would like to unroll this optimization dynamics up to the $K$-th step such that $\theta_t^K = \phi(d_t^l, x_t)$
while make sure that $\theta_t^K$ is a good approximation to the optimum $\theta_t^\star$,
which consists of parameterizing 
\begin{quote}
    (a) the \textbf{initialization} $\theta_t^0$ and 
    (b) the \textbf{gradient} $\nabla_{\theta_t} D_\text{KL}( q_{\theta_t}(w_t) \,\|\, p_{\psi, f}(w_t | d_t) )$.
\end{quote}
By doing so, $\theta_t^K$ becomes a function of $\phi$, $\psi$ and $x_t$\footnote{
$\theta_t^K$ is also dependent of $f$. We deliberately remove this dependency to simplify the update of $f$.}, 
we therefore realize $q_{\phi(d_t^l, x_t)}$ as $q_{\theta_t^K}$.

For (a), we opt to either let $\theta_t^0 = \lambda$ to be a global data-independent initialization as in MAML \citep{finn2017model}
or let $\theta_t^0 = \lambda(d_t^\supp)$ with a few supervisions from the support set,
where $\lambda$ can be implemented by a permutation invariant network described in \citet{gidaris2018dynamic}. 
In the second case, the features of the support set will be first averaged in terms of their labels
and then scaled by a learnable vector of the same size.

For (b), the fundamental reason that we parameterize the gradient %
is because we do not have access to $y_t$ during the meta-testing phase, although
we are able to follow \eqref{eq:gd} in meta-training to obtain $q_{\theta_t^\star}(w_t) \propto p_f(d_t | w_t) p_{\psi}(\rw_t)$.
To make a consistent parameterization in both meta-training and meta-testing, we thus do not touch $y_t$ when constructing the variational posterior.  
Recall that the true gradient decomposes as
\begin{align}
    \nabla_{\theta_t} D_\text{KL}\Big( q_{\theta_t} \| p_{\psi, f} \Big) = 
    \E_\epsilon \Big[ \frac{1}{n} \sum_{i=1}^{\nt} \frac{\partial \ell_t(\hat{y}_{t,i}, y_{t,i})}{\partial \hat{y}_{t,i}}  
    \frac{\partial \hat{y}_{t,i}}{\partial w_t} 
    \frac{\partial w_t(\theta_t, \epsilon)}{\partial \theta_t} \Big]
    + \nabla_{\theta_t} D_\text{KL}\Big( q_{\theta_t} \| p_{\psi} \Big)
\end{align}
under a reparameterization $w_t = w_t(\theta_t, \epsilon)$ with $\epsilon \sim p(\epsilon)$,
where all the terms can be computed without $y_{t}$ 
except for $\frac{\partial \ell_t}{\partial \hat{y}_{t,i}}$.
Thus, we introduce a deep neural network $\xi(\hat{y}_{t,i})$ to synthesize it.
The idea of synthetic gradients was originally proposed by \citet{jaderberg2017decoupled} to 
parallelize the back-propagation. %
Here, the purpose of $\xi(\hat{y}_{t,i})$ is to update $\theta_t$ regardless of the groundtruth labels,
which is slightly different from its original purpose. 
Besides, we do not introduce an additional loss 
between $\xi(\hat{y}_{t,i})$ and $\frac{\partial \ell_t}{\partial \hat{y}_{t,i}}$
since $\xi(\hat{y}_{t,i})$ will be driven by the objective in \eqref{eq:avi}.
As an intermediate computation, the synthetic gradient is not necessarily a good approximation to the true gradient.

\begin{figure*}[t]
    \centering
    \includegraphics[width=1.0\linewidth]{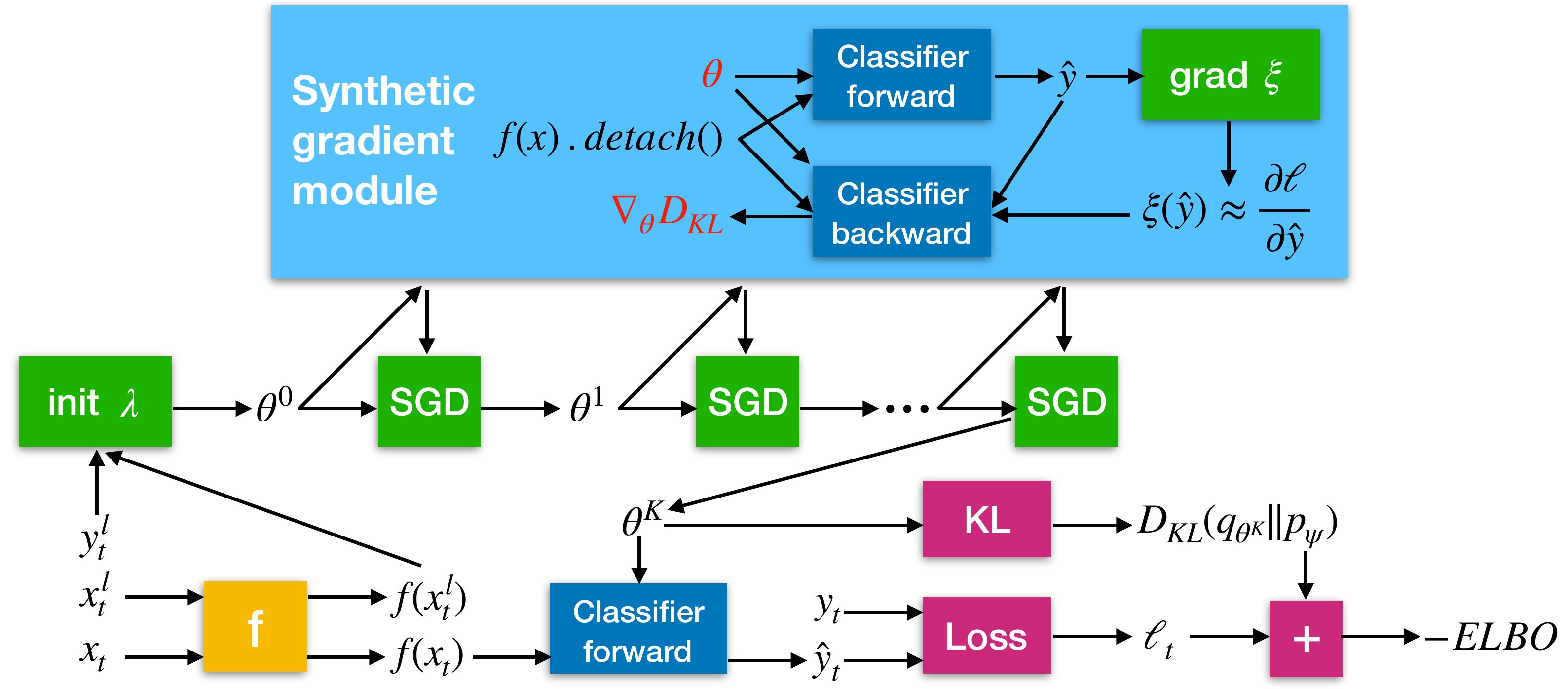}\\
    \caption{The computation graph to compute the negative ELBO, 
    where the input and output of the synthetic gradient module are highlighted in red.
    The detach() is used to stop the back-propagation down to the feature network.
    Note that we do not include every computation for simplicity.
    }
	\label{fig:cg}
\end{figure*}

To sum up, we have derived a particular implementation of $\phi(d_t^l,x_t)$ by parameterizing the exact inference update, 
namely \eqref{eq:gd}, without using the labels of the query set,
where the meta-model $\phi$ includes an initialization network $\lambda$ and a synthetic gradient network $\xi$,
such that $\phi(x_t) = \theta_t^K$, the $K$-th iterate of the following update: 
\begin{align}
    \label{eq:sib}
    \theta_t^{k+1} = \theta_t^k - \eta \, \Big[ 
    \E_\epsilon \Big[ {%
    \frac{1}{n} \sum_{i=1}^{\nt}} \xi(\hat{y}_{t,i}) \frac{\partial \hat{y}_{t,i}}{\partial w_t} 
    \frac{\partial w_t(\theta_t^{k}, \epsilon)}{\partial \theta_t} \Big]
    + \nabla_{\theta_t} D_\text{KL}\Big( q_{\theta_t^k} \| p_{\psi} \Big)
    \Big].
\end{align}
The overall algorithm is depicted in Algorithm~\ref{alg:sib}. 
We also make a side-by-side comparison with MAML shown in Figure~\ref{fig:intro}.
Rather than viewing \eqref{eq:sib} as an optimization process, it may be more precise to think of it as a part of the computation graph created in the forward-propagation. 
The computation graph of the amortized inference is shown in Figure~\ref{fig:cg},

As an extension, if we were deciding to estimate the feature network $f$ in a Bayesian manner, 
we would have to compute higher-order gradients as in the case of MAML.
This is inpractical from a computational point of view and needs technical simplifications \citep{nichol2018first}. 
By introducing a series of synthetic gradient networks in a way similar to \citet{jaderberg2017decoupled}, the computation will be decoupled into computations within each layer, and thus becomes more feasible. 
We see this as a potential advantage of our method
and leave this to our future work\footnote{We do not insist on Bayesian estimation of the feature network because 
most Bayesian versions of CNNs underperform their deterministic counterparts.}.

\begin{algorithm}[t]
  \caption{Variational inference with synthetic gradients for empirical Bayes}
  \label{alg:sib}
  \begin{algorithmic}[1]
    \State \textbf{Input}: the dataset $\gD$; the step size $\eta$; the number of inner iterations $K$; pretrained $f$.
    \State Initialize the meta-models $\psi$, and $\phi = (\lambda, \xi)$.
    \While{not converged}
      \State Sample a task $t$ and the associated query set $d_t$ (plus optionally the support set $d_t^\supp$).
      \State Compute the initialization $\theta_t^0 = \lambda$ or $\theta_t^0 = \lambda(d_t^\supp)$.
      \For{$k = 1,\ldots,K$} 
        \State Compute $\theta_t^k$ via \eqref{eq:sib}.
      \EndFor
      \State Compute $w_t = w_t(\theta_t^K, \epsilon)$ with $\epsilon \sim p(\epsilon)$.
      \State Update $\psi \leftarrow \psi - \eta \,\nabla_\psi D_\text{KL}( q_{\theta_t^K(\psi)} \| p_{\psi} )$.
      \vspace{-0.2mm}
      \State Update $\phi \leftarrow \phi - \eta \,\nabla_\phi D_\text{KL}( q_{\phi(x_t)} \| p_f \cdot p_{\psi} )$.
      \State Optionally, update $f \leftarrow f + \eta \,\nabla_f \log p_f(d_t | w_t)$.
    \EndWhile
  \end{algorithmic}
\end{algorithm}

\section{Generalization analysis of empirical Bayes via the connection to information bottleneck}
\label{sec:theory}

The learning of empirical Bayes (EB) models follows the frequentist's approach,
therefore, we can use frequentist's tool to analyze the model.
In this section, we study the generalization ability of the empirical Bayes model
through its connection to a variant of the information bottleneck principle \citet{Achille17,tishby2000information}. 

\paragraph{Abstract form of empirical Bayes}
From \eqref{eq:eb}, we see that the empirical Bayes model implies a simpler joint distribution since
\begin{align}
    \label{eq:decomp}
    \log p_{\psi,f}(w_1, \ldots, w_N, \gD) = \sum_{t=1}^N \log p_{f}(d_t | w_t) + \log p_{\psi}(w_t),
\end{align}
which is equal to the log-density of $N$ iid samples drawn from the joint distribution
\begin{align}
p(w, d, t) \equiv p_{\psi,f}(w, d, t) = p_{f}(d| w, t) p_{\psi,f}(w) p(t)\footnote{
    When the context is clear, we ignore the parameters and write $p_{\psi,f}(w, d, t) = p(w,d,t)$.}
\end{align}
up to a constant if we introduce a random variable to represent the task and assume $p(t)$ is an uniform distribution.
We thus see that this joint distribution embodies the \emph{generative process} of empirical Bayes. 
Correspondingly, there is another graphical model of the joint distribution characterizes
the \emph{inference process} of the empirical Bayes:
\begin{align}
q(w,d,t) \equiv q_\phi(w, d, t) = q_\phi(w | d,t) q(d|t) q(t), 
\end{align}
where $q_\phi(w | d,t)$ is the abstract form of the variational posterior with amortization,
includes both the inductive form and the transductive form.
The coupling between $p(w,d,t)$ and $q(w,d,t)$ is due to $p(t) \equiv q(t)$ as we only have access to tasks through sampling.

We are interested in the case that the number of tasks $N \rightarrow \infty$,
such as the few-shot learning paradigm proposed by \citet{vinyals2016matching},
in which the objective of \eqref{eq:avi} can be rewritten in an abstract form of
\begin{align}
    \label{eq:popeb}
    \E_{q(t)} \E_{q(d|t)} \Big[ %
        \E_{q(w|d,t)} \big[ -\log p(d | w, t) \big] + D_\text{KL}\big(q(w | d, t) \| p(w) \big)
    \Big].
\end{align}
In fact, optimizing this objective is the same as optimizing \eqref{eq:avi} from
a stochastic gradient descent point of view.

The learning of empirical Bayes with amortized variational inference 
can be understood as a variational EM in the sense that
the E-step amounts to aligning $q(w | d,t)$ with $p(w | d, t)$ %
while the M-step amounts to adjusting the likelihood $p(d | w, t)$ and the prior $p(w)$.

\paragraph{Connection to information bottleneck}
The following theorem shows the connection between %
\eqref{eq:popeb} and the information bottleneck principle.

\begin{restatable*}{thm}{ebibt}
    \label{thm:ib}
    Given distributions $q(w | d, t)$, $q(d|t)$, $q(t)$, $p(w)$ and $p(d | w, t)$, we have
\begin{align}
    \label{eq:ib}
    \eqref{eq:popeb} \geq I_q(w; d | t) + H_{q}(d | w, t),
\end{align}
    where $I_q(w; d | t) := D_\text{KL}\big( q(w,d | t) \| q(w | t) q(d | t) \big)$ is the conditional mutual information
    and $H_{q}(w | d, t) := \E_{q(w, d, t)} [-\log q(w | d, t)]$ is the conditional entropy.
    The equality holds when 
    $$
    \forall t \colon D_\text{KL}(q(w|t) \| p(w)) = 0 \,\,\text{and}\,\, D_\text{KL}(q(d|w,t) \| p(d|w,t)) = 0.
    $$
\end{restatable*}
In fact, the lower bound on \eqref{eq:popeb} is an extention of the information bottleneck principle \citep{Achille17} under the multi-task setting,
which, together with the synthetic gradient based variational posterior, 
    inspire the name \textbf{synthetic information bottleneck} of our method.
The tightness of the lower bound depends on both 
the parameterizations of $p_f(d | w, t)$ and $p_\psi(w)$ 
as well as the optimization of \eqref{eq:popeb}.
It thus can be understood as how well we can align the inference process with the generative process.
    From an inference process point of view, for a given $q(w | d, t)$, the optimal likelihood and prior have been determined.
    In theory, we only need to find the optimal $q(w | d, t)$ using the information bottleneck in \eqref{eq:ib}.
    However, in practice, minimizing \eqref{eq:popeb} is more straightforward.

\paragraph{Generalization of empirical Bayes meta-learning}
    The connection to information bottleneck suggests that we can eliminate $p(d|w,t)$ and $p(w)$ from 
    the generalization analysis of empirical Bayes meta-learning and define the generalization error by $q(w, d, t)$ only.
    To this end, we first identify the \emph{empirical risk} for a single task $t$ with respect to particular weights $w$ and dataset $d$ as 
\begin{align}
L_t(w, d) := \frac{1}{\nt} \sum_{i=1}^{\nt} \ell_t(\hat{y}_i(f(x_i),w), y_i).
\end{align}
    The \emph{true risk} for task $t$ with respect to $w$ is then the expected empirical risk $\E_{d \sim q(d | t)} L_t(w, d)$.
    Now, we define the \emph{generalization error} with respect to $q(w, d, t)$ as 
    the average of the difference between the true risk and the empirical risk over all possible $t, d, w$:
\begin{align}
    \text{gen}(q) &:= 
    \E_{q(t) q(d|t) q(w|d,t)} \Big[ \E_{d \sim q(d | t)} L_t(w, d) - L_t(w, d) \Big] \notag \\
    &=  \E_{q(t) q(d|t) q(w|t)} L_t(w, d) -  \E_{q(t) q(d|t) q(w|d,t)} L_t(w, d),
\end{align}
where $q(w|t)$ is the \emph{aggregated posterior} of task $t$.

Next, we extend the result from \citet{xu2017information} and derive a data-dependent upper bound for $\text{gen}(q)$ using mutual information.
\begin{restatable*}{thm}{genmi}
    Denote by $z = (x, y)$.
    If $\ell_t(\hat{y}_i(f(x_i),w), y_i) \equiv \ell_t(w, z_i)$ is $\sigma$-subgaussian under $q(w|t)q(z|t)$, 
then $L_t(w, d)$ is $\sigma/\sqrt{n}$-subgaussian under $q(w|t)q(d|t)$ due to the iid assumption, and
    \begin{align}
        \big| \text{gen}(q) \big| \leq \sqrt{\frac{2\sigma^2}{n} I_q(w; d | t)}.
    \end{align}
\end{restatable*}

Plugging this back to Theorem~\ref{thm:ib}, we obtain a different interpretation for the empirical Bayes ELBO.

\begin{coro}
    \label{thm:gen}
    If $\ell_t$ is chosen to be the negative log-likelihood, minimizing the population objective of empirical Bayes meta-learning
    amounts to minimizing both the expected generalization error and the expected empirical risk:
    \begin{align}
        \eqref{eq:popeb} \geq \frac{n}{2\sigma^2} \text{gen}(q)^2 + \E_{q(t) q(d|t) q(w|d,t)} L_t(w, d).
    \end{align}
\end{coro}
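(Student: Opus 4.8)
The plan is to combine a decomposition of the population objective \eqref{eq:popeb} with the subgaussian bound of the preceding theorem, taking care to keep the data-fit term \emph{exact} instead of relaxing it to the conditional entropy $H_q(d|w,t)$ as is done in Theorem~\ref{thm:ib}. This is the key point: since cross-entropy dominates entropy, $\E_{q(w,d,t)}[-\log p(d|w,t)] \geq H_q(d|w,t)$, so retaining the reconstruction term verbatim yields a strictly larger (hence stronger) lower bound than one would obtain by substituting the bound of the preceding theorem directly into the statement of Theorem~\ref{thm:ib}.

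First I would split \eqref{eq:popeb} into its reconstruction and complexity parts,
\begin{align}
    \eqref{eq:popeb} = \E_{q(t)q(d|t)q(w|d,t)}\big[ -\log p(d|w,t) \big] + \E_{q(t)q(d|t)}\big[ D_\text{KL}(q(w|d,t)\|p(w)) \big].
\end{align}
For the complexity part I would reuse the identity underlying the proof of Theorem~\ref{thm:ib}: inserting the aggregated posterior $q(w|t)$ and expanding the expectation over $q(w,d,t)$ gives
\begin{align}
    \E_{q(t)q(d|t)}\big[ D_\text{KL}(q(w|d,t)\|p(w)) \big] = I_q(w;d|t) + \E_{q(t)}\big[ D_\text{KL}(q(w|t)\|p(w)) \big] \geq I_q(w;d|t),
\end{align}
dropping the nonnegative last term. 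Next I would identify the reconstruction term with the expected empirical risk: under the hypothesis that $\ell_t$ is the negative log-likelihood, the factorization \eqref{eq:lik} gives $-\log p(d|w,t) = L_t(w,d)$ up to an additive constant from the ignored input marginal, so the reconstruction term equals $\E_{q(t)q(d|t)q(w|d,t)} L_t(w,d)$. Finally I would square the inequality of the preceding theorem to get $I_q(w;d|t) \geq \frac{n}{2\sigma^2}\,\text{gen}(q)^2$, then substitute this lower bound for the complexity term and the empirical risk for the reconstruction term to conclude.

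The main obstacle is the bookkeeping in the identification step rather than any deep estimate: one must check that, with $\ell_t$ taken as the negative log-likelihood, the $1/n$ normalization built into \eqref{eq:lik} makes $-\log p(d|w,t)$ coincide with $L_t(w,d)$ exactly (and not a rescaling of it), and that the discarded constant is independent of $w$ so that it commutes harmlessly with the outer expectation. Everything else is the reuse of the complexity-term manipulation from Theorem~\ref{thm:ib} together with a one-line squaring of the preceding subgaussian bound.
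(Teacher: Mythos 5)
Your proposal is correct and matches the paper's intended argument: the corollary follows by combining the information-bottleneck decomposition of \eqref{eq:popeb} with the mutual-information generalization bound, $I_q(w;d\,|\,t) \geq \frac{n}{2\sigma^2}\,\text{gen}(q)^2$, and your bookkeeping of the $1/n$ normalization and the discarded input-marginal constant is exactly what is needed to identify the reconstruction term with $\E_{q(t)q(d|t)q(w|d,t)} L_t(w,d)$. The one place where you are more careful than the paper's one-line justification (``plugging this back to Theorem~\ref{thm:ib}'') is that you correctly retain the cross-entropy form of the data-fit term from the intermediate inequality \eqref{eq:ibt} rather than the relaxed conditional entropy $H_q(d\,|\,w,t)$ appearing in the final statement of Theorem~\ref{thm:ib}; that choice is indeed required for the right-hand side to read as the expected empirical risk rather than an entropy, so your version is the one that actually delivers the corollary as stated.
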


The Corollary \ref{thm:gen} suggests that \eqref{eq:popeb} amounts to minimizing a regularized empirical risk minimization.
In general, there is a tradeoff between the generalization error and the empirical risk 
controlled by the coefficient $\frac{n}{2 \sigma^2}$, where $n = |d|$ is the cardinality of $d$.
If $n$ is small, then we are in the overfitting regime. 
This is the case of the inductive inference with variational posterior $q(w | d^l, t)$, 
where the support set $d^l$ is fairly small by the definition of few-shot learning.
On the other hand, if we were following the transductive setting, %
we expect to achieve a small generalization error since the implemented variational posterior is a better approximation to $q(w | d, t)$.
However, keeping increasing $n$ will potentially over-regularize the model and thus yield negative effect.
An empirical study on varying $n$ can be found in Table~\ref{tab:abn} in Appendix~\ref{sec:varyn}.

\section{Experiments}
\label{sec:exp}
In this section, we first validate our method on few-shot learning, 
and then on zero-shot learning (no support set and no class description are available). 
Note that many meta-learning methods cannot do zero-shot learning since they rely on the support set.

\subsection{Few-shot classification}
\label{sec:few}

\begin{table}[t]
\centering
\footnotesize
\scalebox{0.9}{
\begin{tabular}{ p{4.35cm} c c c c c }
\toprule
 & & \multicolumn{2}{c}{\textbf{MiniImageNet, 5-way}} & \multicolumn{2}{c}{\textbf{CIFAR-FS, 5-way}} \\
 \textbf{Method} & \textbf{Backbone} & 1-shot & 5-shot & 1-shot & 5-shot \\
\midrule
Matching Net~\citep{vinyals2016matching} & Conv-4-64 & 44.2\% & 57\% & -- & -- \\
MAML~\citep{finn2017model} & Conv-4-64 & 48.7$\pm$1.8\% & 63.1$\pm$0.9\% & 58.9$\pm$1.9\% & 71.5$\pm$1.0\% \\
Prototypical Net~\citep{snell2017prototypical} & Conv-4-64 & 49.4$\pm$0.8\% & 68.2$\pm$0.7\% & 55.5$\pm$0.7\% & 72.0$\pm$0.6\% \\
Relation Net~\citep{sung2018learning} & Conv-4-64 & 50.4$\pm$0.8\% & 65.3$\pm$0.7\% & 55.0$\pm$1.0\% & 69.3$\pm$0.8\% \\
GNN~\citep{Satorras2017FewShotLW} & Conv-4-64 & 50.3\% & 66.4\% & 61.9\% & 75.3\% \\
R2-D2~\citep{Bertinetto2018MetalearningWD} & Conv-4-64 & 49.5$\pm$0.2\% & 65.4$\pm$0.2\% & 62.3$\pm$0.2\% & 77.4$\pm$0.2\% \\
TPN~\citep{liu2018learning} & Conv-4-64 & 55.5\% & 69.9\% & -- & -- \\
\citet{gidaris2019boosting} & Conv-4-64 & 54.8$\pm$0.4\% & \textbf{71.9$\pm$0.3\%} & 63.5$\pm$0.3\% & \textbf{79.8$\pm$0.2\%} \\ \hline

\rowcolor[gray]{.9} SIB $K$=0 ($\textit{Pre-trained feature}$) & Conv-4-64 & 50.0$\pm$0.4\% & 67.0$\pm$0.4\% & 59.2$\pm$0.5\% & 75.4$\pm$0.4\%\\
\rowcolor[gray]{.9} SIB $\eta$=1e-3, $K$=3 & Conv-4-64 & \textbf{58.0$\pm$0.6\%} & 70.7$\pm$0.4\% & \textbf{68.7$\pm$0.6\%} & 77.1$\pm$0.4\%\\ \midrule
\rowcolor[gray]{.9} SIB $\eta$=1e-3, $K$=0 & Conv-4-128 & 53.62 $\pm$ 0.79\% & 71.48 $\pm$ 0.64\% & -- & --\\ 
\rowcolor[gray]{.9} SIB $\eta$=1e-3, $K$=1 & Conv-4-128 & 58.74 $\pm$ 0.89\% & 74.12 $\pm$ 0.63\% & -- & --\\ 
\rowcolor[gray]{.9} SIB $\eta$=1e-3, $K$=3 & Conv-4-128 & 62.59 $\pm$ 1.02\% & 75.43 $\pm$ 0.67\% & -- & --\\ 
\rowcolor[gray]{.9} SIB $\eta$=1e-3, $K$=5 & Conv-4-128 & \textbf{63.26 $\pm$ 1.07\%} & \textbf{75.73 $\pm$ 0.71\%} & -- & --\\ \midrule

TADAM~\citep{Oreshkin2018TADAMTD} & ResNet-12 & 58.5$\pm$0.3\% & 76.7$\pm$0.3\% & -- & -- \\
SNAIL~\citep{Santoro2017ASN} & ResNet-12 & 55.7$\pm$1.0\% & 68.9$\pm$0.9\% & -- & -- \\
MetaOptNet-RR~\citep{lee2019meta} & ResNet-12 & 61.4$\pm$0.6\% & 77.9$\pm$0.5\% & 72.6$\pm$0.7\% & 84.3$\pm$0.5\% \\
MetaOptNet-SVM                     & ResNet-12 & 62.6$\pm$0.6\% & 78.6$\pm$0.5\% & 72.0$\pm$0.7\% & 84.2$\pm$0.5\% \\
CTM~\citep{li2019ctm} & ResNet-18 & 64.1$\pm$0.8\% & \textbf{80.5$\pm$0.1\%} & -- & -- \\

\citet{qiao2018few} & WRN-28-10 & 59.6$\pm$0.4\% & 73.7$\pm$0.2\% & -- & --\\
LEO~\citep{rusu2018metalearning} & WRN-28-10 & 61.8$\pm$0.1\% & 77.6$\pm$0.1\% & -- & -- \\
\citet{gidaris2019boosting} & WRN-28-10 & 62.9$\pm$0.5\% & 79.9$\pm$0.3\% & 73.6$\pm$0.3\% & \textbf{86.1$\pm$0.2\%} \\ \hline

\rowcolor[gray]{.9} SIB $K$=0 ($\textit{Pre-trained feature}$) & WRN-28-10 & 60.6$\pm$0.4\% & 77.5$\pm$0.3\% & 70.0$\pm$0.5\% & 83.5$\pm$0.4\% \\
\rowcolor[gray]{.9} SIB $\eta$=1e-3, $K$=1 & WRN-28-10 & 67.3$\pm$0.5\% & 78.8$\pm$0.4\% & 76.8$\pm$0.5\%& 84.9$\pm$0.4\%\\
\rowcolor[gray]{.9} SIB $\eta$=1e-3, $K$=3 & WRN-28-10 & 69.6$\pm$0.6 \% & 78.9$\pm$0.4\% & 78.4$\pm$0.6\% & 85.3$\pm$0.4\%\\
\rowcolor[gray]{.9} SIB $\eta$=1e-3, $K$=5 & WRN-28-10 & \textbf{70.0$\pm$0.6\%} & 79.2$\pm$0.4\% & \textbf{80.0$\pm$0.6\%}& 85.3$\pm$0.4\% \\

\bottomrule
\end{tabular}}
\caption{Average classification accuracies (with 95\% confidence intervals) on the test-set of MiniImageNet and CIFAR-FS. 
    For evaluation, we sample 2000 and 5000 episodes respectively for MiniImageNet and CIFAR-FS 
    and test three different architectures as the feature extractor: Conv-4-64, Conv-4-128 and WRN-28-10. 
    We train SIB with learning rate $0.001$ and try different numbers of synthetic gradient steps $K$.}
\label{tab:tab-1}
\end{table}

We compare SIB with state-of-the-art methods on few-shot classification problems.
Our code is available at \url{https://github.com/amzn/xfer}.

\subsubsection{Setup}

\paragraph{Datasets}
We choose standard benchmarks of few-shot classification for this experiment.
Each benchmark is composed of disjoint training, validation and testing classes. 
\textbf{MiniImageNet} is proposed by \citet{vinyals2016matching}, which contains 100 classes, 
split into 64 training classes, 16 validation classes and 20 testing classes;
each image is of size 84$\times$84. 
\textbf{CIFAR-FS} is proposed by~\citet{Bertinetto2018MetalearningWD}, 
which is created by dividing the original CIFAR-100 into 
64 training classes, 16 validation classes and 20 testing classes; 
each image is of size 32$\times$32.

\paragraph{Evaluation metrics} In few-shot classification, 
a task (aka episode) $t$ consists of a \emph{query set} $d_t$ and a \emph{support set} $d_t^\supp$. 
When we say the task $t$ is \emph{$k$-way-$n^\supp$-shot} we mean that 
$d_t^\supp$ is formed by first sampling $k$ classes from a pool of classes;
then, for each sampled class, $n^\supp$ examples are drawn and a new label taken from $\{0, \ldots, k-1\}$ is assigned to these examples.
By default, each query set contains $15k$ examples.
The goal of this problem is to predict the labels of the query set, which are provided as ground truth during training.
The evaluation is the average classification accuracy over tasks.%

\paragraph{Network architectures}
Following \citet{gidaris2018dynamic,qiao2018few,gidaris2019boosting}, we implement $f$ by a 4-layer convolutional network (Conv-4-64 or Conv-4-128\footnote{
    Conv-4-64 consists of 4 convolutional blocks each implemented with a $3 \times 3$ convolutional layer followed by BatchNorm + ReLU + $2 \times 2$ 
    max-pooling units. All blocks of Conv-4-64 have 64 feature channels. 
    Conv-4-128 has 64 feature channels in the first two blocks and  128 feature channels in the last two blocks.
})
or a WideResNet (WRN-28-10) \citep{zagoruyko2016wide}. %
We pretrain the feature network $f(\cdot)$ on the 64 training classes for a stardard $64$-way classification.
We reuse the feature averaging network proposed by \citet{gidaris2018dynamic} as our initialization network $\lambda(\cdot)$, 
which basically averages the feature vectors of all data points from the same class and then scales each feature dimension differently by a learned coefficient.
For the synthetic gradient network $\xi(\cdot)$, we implement a three-layer MLP with hidden-layer size $8k$.
Finally, for the predictor $\hat{y}_{ij}(\cdot, w_i)$, we adopt the cosine-similarity based classifier advocated by \citet{chen2019closer} and \citet{gidaris2018dynamic}.

\paragraph{Training details}
We run SGD with batch size $8$ for 40000 steps, where the learning rate is fixed to $10^{-3}$. %
During training, we freeze the feature network.
To select the best hyper-parameters, %
we sample 1000 tasks from the validation classes and reuse them at each training epoch.

\subsubsection{Comparison to state-of-the-art meta-learning methods}
\label{sec:sota}

In Table~\ref{tab:tab-1},
we show a comparison between the state-of-the-art approaches 
and several variants of our method (varying $K$ or $f(\cdot)$).
Apart from SIB, TPN~\citep{liu2018learning} and CTM~\citep{li2019ctm} are also transductive methods.

First of all, comparing SIB ($K=3$) to SIB ($K=0$), 
we observe a clear improvement, 
which suggests that, by taking a few synthetic gradient steps,
we do obtain a better variational posterior as promised.
For 1-shot learning, SIB (when $K=3$ or $K=5$) significantly outperforms previous methods 
on both MiniImageNet and CIFAR-FS. 
For 5-shot learning, the results are also comparable. 
It should be noted that the performance boost is consistently observed 
with different feature networks,
which suggests that SIB is a general method for few-shot learning.

However, we also observe a potential limitation of SIB:
when the support set is relatively large, e.g., 5-shot,
with a good feature network like WRN-28-10,
the initialization $\theta_t^0$ may already be close to some local minimum,
making the updates later less important.

For 5-shot learning, SIB is sligtly worse than CTM~\citep{li2019ctm} and/or \citet{gidaris2019boosting}.
CMT~\citep{li2019ctm} can be seen as an alternative way to incorporate transduction -- 
it measures the similarity between a query example and the support set while making use of intra- and inter-class relationships.  
\citet{gidaris2019boosting} uses in addition the self-supervision as an auxilary loss to learn a richer and more transferable feature model.
Both ideas are complementary to SIB. We leave these extensions to our future work.

\subsection{Zero-shot regression: spinning lines} %

\begin{figure*}[ht]
	\centering
	\begin{minipage}{.32\textwidth}
		\centering
		\includegraphics[width=1.0\linewidth]{./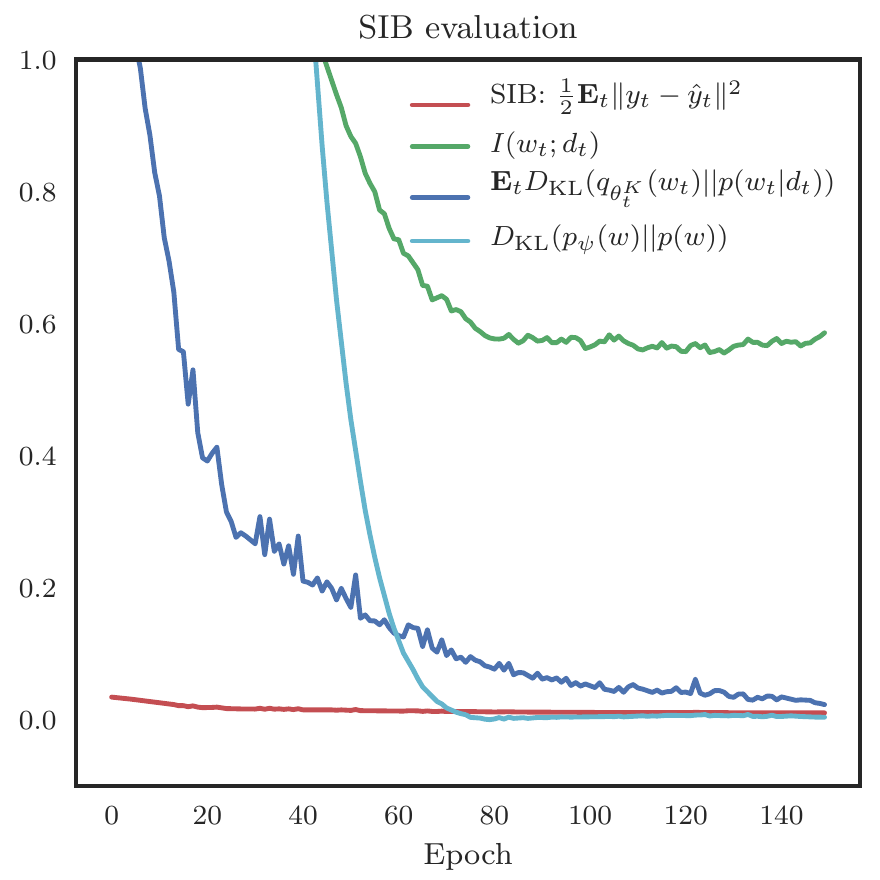}
	\end{minipage} 
	\hfill
	\begin{minipage}{.32\textwidth}
		\centering
		\includegraphics[width=1.0\linewidth]{./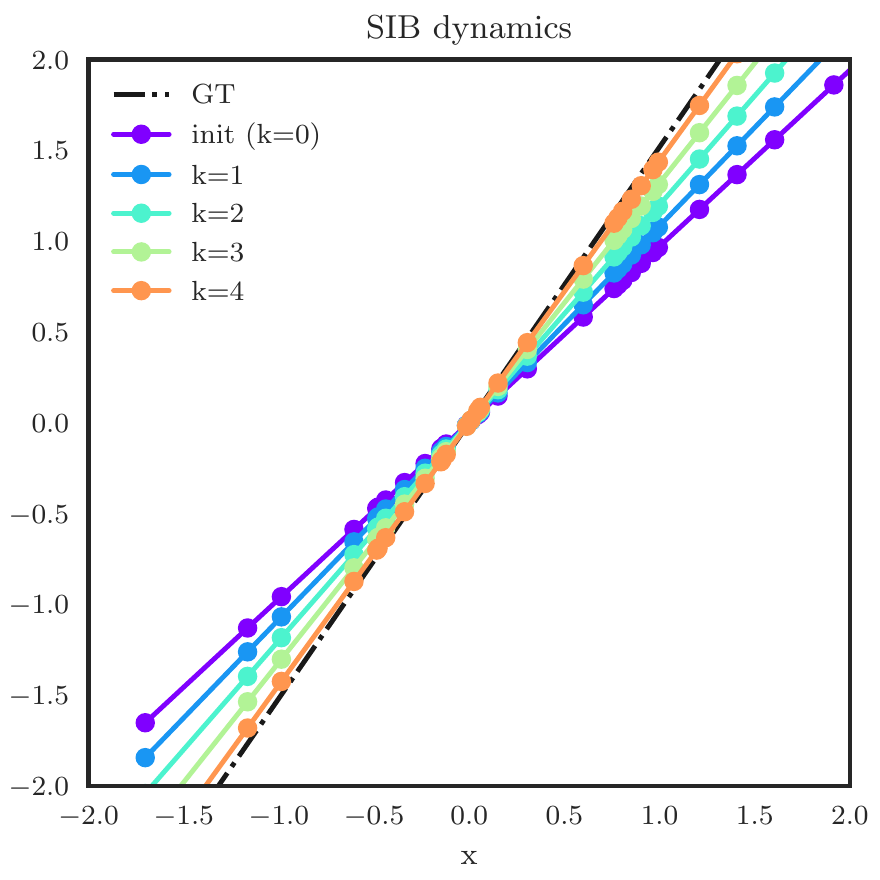}
	\end{minipage}
	\hfill
	\begin{minipage}{.32\textwidth}
		\centering
		\includegraphics[width=1.0\linewidth]{./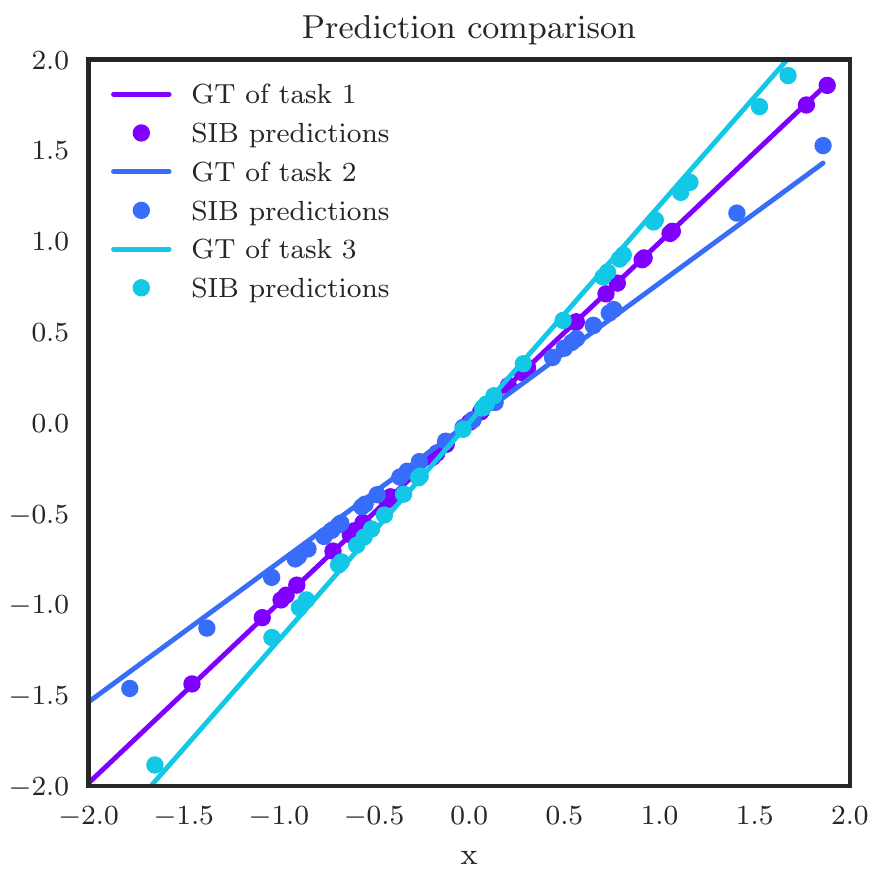}
	\end{minipage}
	\caption{%
		\textbf{Left}: the mean-square errors on $D_\text{test}$,
        $\E_t D_\text{KL}(q_{\theta_t^K}(w_t) \| p(w_t|d_t))$, $D_\text{KL}(p_\psi(w) \| p(w))$ and the estimate
        of $I(w; d) \approx \E_t D_\text{KL}(q_{\theta_t^K}(w_t) \| p_\psi(w_t))$.
		\textbf{Middle}: the predicted $y$'s by $y = \theta_t^k x$ for $k = 0,\ldots,4$. %
		\textbf{Right}: the predictions of SIB.}
	\label{fig:toy}
\end{figure*}

Since our variational posterior relies only on $x_t$, SIB is also applicable to zero-shot problems (i.e., no support set available).
We first look at a toy multi-task problem, %
where $I(w_t; d_t)$ is tractable.

Denote by $D_\text{train} := \{ d_t \}_{t=1}^N$ the train set, which consists of datasets of size $n$: $d = \{ (x_i, y_i) \}_{i=1}^n$.
We construct a dataset $d$ by firstly sampling iid Gaussian random variables as inputs: $x_i \sim \mathcal{N}(\mu, \sigma^2)$.
Then, we generate the weight for each dataset by calculating the mean of the inputs and shifting with a Gaussian random variable $\epsilon_w$:
$
w = \frac{1}{n} \sum_i x_i + \epsilon_w, \, \epsilon_w \sim \mathcal{N}(\mu_w, \sigma_w^2).
$
The output for $x_i$ is $y_i = w \cdot x_i.$
We decide ahead of time the hyperparameters $\mu, \sigma, \mu_w, \sigma_w$ for generating $x_i$ and $y_i$. 
Recall that a weighted sum of iid Gaussian random variables is still a Gaussian random variable.
Specifically, if $w = \sum_i c_i x_i$ and $x_i \sim \mathcal{N}(\mu_i, \sigma_i^2)$, 
then $w \sim \mathcal{N}(\sum_i c_i \mu_i, \sum_i c_i^2 \sigma_i^2)$.
Therefore, we have $p(w) = \mathcal{N}(\mu + \mu_w, \frac{1}{n}\sigma^2 + \sigma_w^2)$.
On the other hand, if we are given a dataset $d$ of size $n$, the only uncertainty about $w$ comes from $\epsilon_w$,
that is, we should consider $x_i$ as a constant given $d$.
Therefore, the posterior $p(w | d) = \mathcal{N}(\frac{1}{n} \sum_{i=1}^n x_i + \mu_w, \sigma_w^2)$.
We use a simple implementation for SIB:
The variational posterior is realized by
    \begin{align}
    \label{eq:toy}
        q_{\theta_t^K}(w) = \mathcal{N}(\theta_t^K, \sigma_w), \,\,
        \theta_t^{k+1} = \theta_t^{k} - 10^{-3} \sum_{i=1}^n x_i \xi(\theta_t^{k} x_i),\,\,\text{and}\,\, \theta_t^0 = \lambda \in \R;
    \end{align}
$\ell_t$ is a mean squared error, implies that $p(y|x,w) = \mathcal{N}(w x, 1)$;
$p_\psi(w)$ is a Gaussian distribution with parameters $\psi \in \R^2$;
The synthetic gradient network $\xi$ is a three-layer MLP with hidden size $8$.

In the experiment, we sample $240$ tasks respectively for both $D_\text{train}$ and $D_\text{test}$.
We learn SIB and BNN on $D_\text{train}$ for $150$ epochs using the ADAM optimizer \citep{kingma2014adam}, with learning rate $10^{-3}$
and batch size $8$.
Other hyperparameters are specified as follows:
$n = 32, K=3, \mu=0, \sigma=1, \mu_w=1, \sigma_w=0.1.$
The results are shown in Figure~\ref{fig:toy}. On the left, both 
$D_\text{KL}(q_{\theta_t^K}(w_t) \| p(w_t|d_t))$ and $D_\text{KL}(p_\psi(w) \| p(w))$ are close to zero
indicating the success of the learning. 
More interestingly, in the middle, we see that $\theta_t^0, \theta_t^1, \ldots, \theta_t^4$ evolves gradually towards the ground truth,
which suggests that the synthetic gradient network is able to identify the descent direction after meta-learning.

\section{Conclusion}

We have presented an empirical Bayesian framework for meta-learning. 
To enable an efficient variational inference, we followed the amortized inference paradigm,
and proposed to use a transductive scheme for constructing the variational posterior.
To implement the transductive inference, we make use of two neural networks:
a synthetic gradient network and an initialization network,
which together enables a synthetic gradient descent on the unlabeled data 
to generate the parameters of the amortized variational posterior dynamically.
We have studied the theoretical properties of the proposed framework 
and shown that it yields performance boost on MiniImageNet and CIFAR-FS for few-shot classification.


\bibliography{refs}

\begin{thebibliography}{51}
\providecommand{\natexlab}[1]{#1}
\providecommand{\url}[1]{\texttt{#1}}
\expandafter\ifx\csname urlstyle\endcsname\relax
  \providecommand{\doi}[1]{doi: #1}\else
  \providecommand{\doi}{doi: \begingroup \urlstyle{rm}\Url}\fi

\bibitem[Achille \& Soatto(2017)Achille and Soatto]{Achille17}
Alessandro Achille and Stefano Soatto.
\newblock Emergence of invariance and disentangling in deep representations.
\newblock \emph{arXiv preprint arXiv:1706.01350}, 2017.

\bibitem[Amit \& Meir(2018)Amit and Meir]{amit2018meta}
Ron Amit and Ron Meir.
\newblock Meta-learning by adjusting priors based on extended pac-bayes theory.
\newblock In \emph{International Conference on Machine Learning}, pp.\
  205--214, 2018.

\bibitem[Bengio et~al.(1991)Bengio, Bengio, and Cloutier]{bengio1991learning}
Y~Bengio, S~Bengio, and J~Cloutier.
\newblock Learning a synaptic learning rule.
\newblock In \emph{IJCNN-91-Seattle International Joint Conference on Neural
  Networks}, volume~2, pp.\  969--vol. IEEE, 1991.

\bibitem[Berger(1985)]{berger1985}
James~O. Berger.
\newblock \emph{Statistical Decision Theory and Bayesian Analysis}.
\newblock Springer-Verlag New York, 1985.

\bibitem[Bertinetto et~al.(2018)Bertinetto, Henriques, Torr, and
  Vedaldi]{Bertinetto2018MetalearningWD}
Luca Bertinetto, Jo{\~a}o~F. Henriques, Philip H.~S. Torr, and Andrea Vedaldi.
\newblock Meta-learning with differentiable closed-form solvers.
\newblock \emph{ArXiv}, abs/1805.08136, 2018.

\bibitem[Carlucci et~al.(2019)Carlucci, D'Innocente, Bucci, Caputo, and
  Tommasi]{carlucci2019domain}
Fabio~M Carlucci, Antonio D'Innocente, Silvia Bucci, Barbara Caputo, and
  Tatiana Tommasi.
\newblock Domain generalization by solving jigsaw puzzles.
\newblock In \emph{Proceedings of the IEEE Conference on Computer Vision and
  Pattern Recognition}, pp.\  2229--2238, 2019.

\bibitem[Chen et~al.(2019)Chen, Liu, Kira, Wang, and Huang]{chen2019closer}
Wei-Yu Chen, Yen-Cheng Liu, Zsolt Kira, Yu-Chiang~Frank Wang, and Jia-Bin
  Huang.
\newblock A closer look at few-shot classification.
\newblock \emph{arXiv preprint arXiv:1904.04232}, 2019.

\bibitem[Devlin et~al.(2018)Devlin, Chang, Lee, and Toutanova]{devlin2018bert}
Jacob Devlin, Ming-Wei Chang, Kenton Lee, and Kristina Toutanova.
\newblock Bert: Pre-training of deep bidirectional transformers for language
  understanding.
\newblock \emph{arXiv preprint arXiv:1810.04805}, 2018.

\bibitem[Finn et~al.(2017)Finn, Abbeel, and Levine]{finn2017model}
Chelsea Finn, Pieter Abbeel, and Sergey Levine.
\newblock Model-agnostic meta-learning for fast adaptation of deep networks.
\newblock In \emph{Proceedings of the 34th International Conference on Machine
  Learning-Volume 70}, pp.\  1126--1135. JMLR. org, 2017.

\bibitem[Flennerhag et~al.(2019)Flennerhag, Moreno, Lawrence, and
  Damianou]{flennerhag2019transferring}
Sebastian Flennerhag, Pablo~G Moreno, Neil~D Lawrence, and Andreas Damianou.
\newblock Transferring knowledge across learning processes.
\newblock \emph{International Conference on Learning Representations (ICLR)},
  2019.

\bibitem[Garnelo et~al.(2018)Garnelo, Schwarz, Rosenbaum, Viola, Rezende,
  Eslami, and Teh]{garnelo2018neural}
Marta Garnelo, Jonathan Schwarz, Dan Rosenbaum, Fabio Viola, Danilo~J Rezende,
  SM~Eslami, and Yee~Whye Teh.
\newblock Neural processes.
\newblock \emph{arXiv preprint arXiv:1807.01622}, 2018.

\bibitem[Gidaris \& Komodakis(2018)Gidaris and Komodakis]{gidaris2018dynamic}
Spyros Gidaris and Nikos Komodakis.
\newblock Dynamic few-shot visual learning without forgetting.
\newblock In \emph{Proceedings of the IEEE Conference on Computer Vision and
  Pattern Recognition}, pp.\  4367--4375, 2018.

\bibitem[Gidaris et~al.(2018)Gidaris, Singh, and
  Komodakis]{gidaris2018unsupervised}
Spyros Gidaris, Praveer Singh, and Nikos Komodakis.
\newblock Unsupervised representation learning by predicting image rotations.
\newblock \emph{arXiv preprint arXiv:1803.07728}, 2018.

\bibitem[Gidaris et~al.(2019)Gidaris, Bursuc, Komodakis, P{\'e}rez, and
  Cord]{gidaris2019boosting}
Spyros Gidaris, Andrei Bursuc, Nikos Komodakis, Patrick P{\'e}rez, and Matthieu
  Cord.
\newblock Boosting few-shot visual learning with self-supervision.
\newblock \emph{arXiv preprint arXiv:1906.05186}, 2019.

\bibitem[Good(1980)]{good1980some}
Irving~John Good.
\newblock Some history of the hierarchical bayesian methodology.
\newblock \emph{Trabajos de estad{\'\i}stica y de investigaci{\'o}n operativa},
  31\penalty0 (1):\penalty0 489, 1980.

\bibitem[Grant et~al.(2018)Grant, Finn, Levine, Darrell, and
  Griffiths]{grant2018recasting}
Erin Grant, Chelsea Finn, Sergey Levine, Trevor Darrell, and Thomas Griffiths.
\newblock Recasting gradient-based meta-learning as hierarchical bayes.
\newblock \emph{arXiv preprint arXiv:1801.08930}, 2018.

\bibitem[He et~al.(2015)He, Zhang, Ren, and Sun]{he2015delving}
Kaiming He, Xiangyu Zhang, Shaoqing Ren, and Jian Sun.
\newblock Delving deep into rectifiers: Surpassing human-level performance on
  imagenet classification.
\newblock In \emph{Proceedings of the IEEE international conference on computer
  vision}, pp.\  1026--1034, 2015.

\bibitem[Ioffe \& Szegedy(2015)Ioffe and Szegedy]{ioffe2015batch}
Sergey Ioffe and Christian Szegedy.
\newblock Batch normalization: Accelerating deep network training by reducing
  internal covariate shift.
\newblock \emph{arXiv preprint arXiv:1502.03167}, 2015.

\bibitem[Jaderberg et~al.(2017)Jaderberg, Czarnecki, Osindero, Vinyals, Graves,
  Silver, and Kavukcuoglu]{jaderberg2017decoupled}
Max Jaderberg, Wojciech~Marian Czarnecki, Simon Osindero, Oriol Vinyals, Alex
  Graves, David Silver, and Koray Kavukcuoglu.
\newblock Decoupled neural interfaces using synthetic gradients.
\newblock In \emph{Proceedings of the 34th International Conference on Machine
  Learning-Volume 70}, pp.\  1627--1635. JMLR, 2017.

\bibitem[Kim et~al.(2019)Kim, Mnih, Schwarz, Garnelo, Eslami, Rosenbaum,
  Vinyals, and Teh]{kim2019attentive}
Hyunjik Kim, Andriy Mnih, Jonathan Schwarz, Marta Garnelo, Ali Eslami, Dan
  Rosenbaum, Oriol Vinyals, and Yee~Whye Teh.
\newblock Attentive neural processes.
\newblock \emph{arXiv preprint arXiv:1901.05761}, 2019.

\bibitem[Kingma \& Ba(2014)Kingma and Ba]{kingma2014adam}
Diederik~P Kingma and Jimmy Ba.
\newblock Adam: A method for stochastic optimization.
\newblock \emph{arXiv preprint arXiv:1412.6980}, 2014.

\bibitem[Kingma \& Welling(2013)Kingma and Welling]{kingma2013auto}
Diederik~P Kingma and Max Welling.
\newblock Auto-encoding variational bayes.
\newblock \emph{arXiv preprint arXiv:1312.6114}, 2013.

\bibitem[Kucukelbir \& Blei(2014)Kucukelbir and Blei]{kucukelbir2014population}
Alp Kucukelbir and David~M Blei.
\newblock Population empirical bayes.
\newblock \emph{arXiv preprint arXiv:1411.0292}, 2014.

\bibitem[Lee et~al.(2019{\natexlab{a}})Lee, Lee, Kim, Kosiorek, Choi, and
  Teh]{lee2019set}
Juho Lee, Yoonho Lee, Jungtaek Kim, Adam Kosiorek, Seungjin Choi, and Yee~Whye
  Teh.
\newblock Set transformer: A framework for attention-based
  permutation-invariant neural networks.
\newblock In \emph{International Conference on Machine Learning}, pp.\
  3744--3753, 2019{\natexlab{a}}.

\bibitem[Lee et~al.(2019{\natexlab{b}})Lee, Maji, Ravichandran, and
  Soatto]{lee2019meta}
Kwonjoon Lee, Subhransu Maji, Avinash Ravichandran, and Stefano Soatto.
\newblock Meta-learning with differentiable convex optimization.
\newblock In \emph{CVPR}, 2019{\natexlab{b}}.

\bibitem[Li et~al.(2017{\natexlab{a}})Li, Yang, Song, and
  Hospedales]{li2017deeper}
Da~Li, Yongxin Yang, Yi-Zhe Song, and Timothy~M Hospedales.
\newblock Deeper, broader and artier domain generalization.
\newblock In \emph{Proceedings of the IEEE International Conference on Computer
  Vision}, pp.\  5542--5550, 2017{\natexlab{a}}.

\bibitem[Li et~al.(2019)Li, Eigen, Dodge, Zeiler, and Wang]{li2019ctm}
Hongyang Li, David Eigen, Samuel Dodge, Matthew Zeiler, and Xiaogang Wang.
\newblock {Finding Task-Relevant Features for Few-Shot Learning by Category
  Traversal}.
\newblock In \emph{CVPR}, 2019.

\bibitem[Li et~al.(2017{\natexlab{b}})Li, Zhou, Chen, and Li]{li2017meta}
Zhenguo Li, Fengwei Zhou, Fei Chen, and Hang Li.
\newblock Meta-sgd: Learning to learn quickly for few-shot learning.
\newblock \emph{arXiv preprint arXiv:1707.09835}, 2017{\natexlab{b}}.

\bibitem[Liu et~al.(2018)Liu, Lee, Park, Kim, Yang, Hwang, and
  Yang]{liu2018learning}
Yanbin Liu, Juho Lee, Minseop Park, Saehoon Kim, Eunho Yang, Sung~Ju Hwang, and
  Yi~Yang.
\newblock Learning to propagate labels: Transductive propagation network for
  few-shot learning.
\newblock \emph{arXiv preprint arXiv:1805.10002}, 2018.

\bibitem[Minka(2005)]{minka2005discriminative}
Tom Minka.
\newblock Discriminative models, not discriminative training.
\newblock Technical report, Technical Report MSR-TR-2005-144, Microsoft
  Research, 2005.

\bibitem[Nichol et~al.(2018)Nichol, Achiam, and Schulman]{nichol2018first}
Alex Nichol, Joshua Achiam, and John Schulman.
\newblock On first-order meta-learning algorithms.
\newblock \emph{arXiv preprint arXiv:1803.02999}, 2018.

\bibitem[Oreshkin et~al.(2018)Oreshkin, L{\'o}pez, and
  Lacoste]{Oreshkin2018TADAMTD}
Boris~N. Oreshkin, Pau~Rodr{\'i}guez L{\'o}pez, and Alexandre Lacoste.
\newblock Tadam: Task dependent adaptive metric for improved few-shot learning.
\newblock In \emph{Advances in Neural Information Processing Systems (NIPS)},
  2018.

\bibitem[Qiao et~al.(2018)Qiao, Liu, Shen, and Yuille]{qiao2018few}
Siyuan Qiao, Chenxi Liu, Wei Shen, and Alan~L Yuille.
\newblock Few-shot image recognition by predicting parameters from activations.
\newblock In \emph{Proceedings of the IEEE Conference on Computer Vision and
  Pattern Recognition}, pp.\  7229--7238, 2018.

\bibitem[Ravi \& Beatson(2018)Ravi and Beatson]{ravi2018amortized}
Sachin Ravi and Alex Beatson.
\newblock Amortized bayesian meta-learning.
\newblock \emph{International Conference on Learning Representation}, 2018.

\bibitem[Ravi \& Larochelle(2016)Ravi and Larochelle]{ravi2016optimization}
Sachin Ravi and Hugo Larochelle.
\newblock Optimization as a model for few-shot learning.
\newblock \emph{International Conference on Learning Representation}, 2016.

\bibitem[Rezende et~al.(2014)Rezende, Mohamed, and
  Wierstra]{rezende2014stochastic}
Danilo~Jimenez Rezende, Shakir Mohamed, and Daan Wierstra.
\newblock Stochastic backpropagation and approximate inference in deep
  generative models.
\newblock \emph{arXiv preprint arXiv:1401.4082}, 2014.

\bibitem[Robbins(1985)]{robbins1985empirical}
Herbert Robbins.
\newblock An empirical bayes approach to statistics.
\newblock In \emph{Herbert Robbins Selected Papers}, pp.\  41--47. Springer,
  1985.

\bibitem[Rusu et~al.(2019)Rusu, Rao, Sygnowski, Vinyals, Pascanu, Osindero, and
  Hadsell]{rusu2018metalearning}
Andrei~A. Rusu, Dushyant Rao, Jakub Sygnowski, Oriol Vinyals, Razvan Pascanu,
  Simon Osindero, and Raia Hadsell.
\newblock Meta-learning with latent embedding optimization.
\newblock In \emph{International Conference on Learning Representations}, 2019.
\newblock URL \url{https://openreview.net/forum?id=BJgklhAcK7}.

\bibitem[Santoro et~al.(2017)Santoro, Raposo, Barrett, Malinowski, Pascanu,
  Battaglia, and Lillicrap]{Santoro2017ASN}
Adam Santoro, David Raposo, David G.~T. Barrett, Mateusz Malinowski, Razvan
  Pascanu, Peter~W. Battaglia, and Timothy~P. Lillicrap.
\newblock A simple neural network module for relational reasoning.
\newblock In \emph{NIPS}, 2017.

\bibitem[Satorras \& Bruna(2017)Satorras and Bruna]{Satorras2017FewShotLW}
Victor~Garcia Satorras and Joan Bruna.
\newblock Few-shot learning with graph neural networks.
\newblock \emph{ArXiv}, abs/1711.04043, 2017.

\bibitem[Schmidhuber(1987)]{schmidhuber1987}
Jurgen Schmidhuber.
\newblock Evolutionary principles in self-referential learning. on learning now
  to learn: The meta-meta-meta...-hook.
\newblock Phd thesis, Technische Universitat Munchen, Germany, 1987.
\newblock URL \url{http://www.idsia.ch/~juergen/diploma.html}.

\bibitem[Snell et~al.(2017)Snell, Swersky, and Zemel]{snell2017prototypical}
Jake Snell, Kevin Swersky, and Richard Zemel.
\newblock Prototypical networks for few-shot learning.
\newblock In \emph{Advances in Neural Information Processing Systems}, pp.\
  4077--4087, 2017.

\bibitem[Sung et~al.(2018)Sung, Yang, Zhang, Xiang, Torr, and
  Hospedales]{sung2018learning}
Flood Sung, Yongxin Yang, Li~Zhang, Tao Xiang, Philip~HS Torr, and Timothy~M
  Hospedales.
\newblock Learning to compare: Relation network for few-shot learning.
\newblock In \emph{Proceedings of the IEEE Conference on Computer Vision and
  Pattern Recognition}, 2018.

\bibitem[Thrun \& Pratt(1998)Thrun and Pratt]{thrun1998learning}
Sebastian Thrun and Lorien Pratt.
\newblock \emph{Learning to learn}.
\newblock Kluwer Academic Publishers, 1998.

\bibitem[Tishby et~al.(2000)Tishby, Pereira, and Bialek]{tishby2000information}
Naftali Tishby, Fernando~C Pereira, and William Bialek.
\newblock The information bottleneck method.
\newblock \emph{arXiv preprint physics/0004057}, 2000.

\bibitem[Vinyals et~al.(2016)Vinyals, Blundell, Lillicrap, kavukcuoglu, and
  Wierstra]{vinyals2016matching}
Oriol Vinyals, Charles Blundell, Timothy Lillicrap, koray kavukcuoglu, and Daan
  Wierstra.
\newblock Matching networks for one shot learning.
\newblock In \emph{Advances in Neural Information Processing Systems 29}, pp.\
  3630--3638. Curran Associates, Inc., 2016.

\bibitem[Xu(2016)]{xu2016information}
Aolin Xu.
\newblock \emph{Information-theoretic limitations of distributed information
  processing}.
\newblock PhD thesis, University of Illinois at Urbana-Champaign, 2016.

\bibitem[Xu \& Raginsky(2017)Xu and Raginsky]{xu2017information}
Aolin Xu and Maxim Raginsky.
\newblock Information-theoretic analysis of generalization capability of
  learning algorithms.
\newblock In \emph{Advances in Neural Information Processing Systems}, pp.\
  2524--2533, 2017.

\bibitem[Xu et~al.(2019)Xu, Xiao, and L{\'o}pez]{xu2019self}
Jiaolong Xu, Liang Xiao, and Antonio~M L{\'o}pez.
\newblock Self-supervised domain adaptation for computer vision tasks.
\newblock \emph{IEEE Access}, 7:\penalty0 156694--156706, 2019.

\bibitem[Zagoruyko \& Komodakis(2016)Zagoruyko and
  Komodakis]{zagoruyko2016wide}
Sergey Zagoruyko and Nikos Komodakis.
\newblock Wide residual networks.
\newblock \emph{arXiv preprint arXiv:1605.07146}, 2016.

\bibitem[Zhu et~al.(2003)Zhu, Ghahramani, and Lafferty]{zhu2003semi}
Xiaojin Zhu, Zoubin Ghahramani, and John~D Lafferty.
\newblock Semi-supervised learning using gaussian fields and harmonic
  functions.
\newblock In \emph{Proceedings of the 20th International conference on Machine
  learning (ICML-03)}, pp.\  912--919, 2003.

\end{thebibliography}
\bibliographystyle{iclr2020_conference}

\appendix
\section*{Appendix}
\renewcommand{\thesubsection}{\Alph{subsection}}

\subsection{Proofs}

\ebibt
\begin{proof}
    Denote by $q(w | t) := \E_{q(d|t)} q(w | d, t) q(d | t)$ the aggregated posterior of task $t$.
    \eqref{eq:popeb} can be decomposed as
\begin{align}
    &\E_{q(t)} \E_{q(d|t)} \Big[ 
        \E_{q(w|d,t)} \big[ -\log p(d | w, t) \big] + D_\text{KL}\big(q(w | d, t) \| p(w) \big)
    \Big] \\ 
    &= \E_{q(t)} \E_{q(d|t)} \E_{q(w|d,t)} \Big[ \log \frac{q(w | d, t)q(w | t)}{p(d | w, t) p(w) q(w | t)} \Big] \\
    &= \E_{q(t)} \E_{q(d|t)} \E_{q(w|d,t)} \Big[ \log \frac{q(w | d, t)}{q(w | t)} \Big] 
    + \E_{q(t)} \E_{q(d|t)} \E_{q(w|d,t)} \Big[ -\log p(d | w, t) \Big] \notag \\
    &\quad + \E_{q(t)} \E_{q(d|t)} \E_{q(w|d,t)} \Big[ \log \frac{q(w | t)}{p(w)} \Big] \\
    &= I_q(w; d | t) + H_{q,p}(d | w, t) + \E_{q(t)} D_\text{KL}(q(w|t) \| p(w)) \\
    &\geq I_q(w; d | t) + H_{q,p}(d | w, t) \label{eq:ibt}.
\end{align}
    The inequality is because $D_\text{KL}(q(w|t) \| p(w)) \geq 0$ for all $t$'s.
    Besides, we used the notation $H_{q,p}$, which is the conditional cross entropy. %
    Recall that $D_\text{KL}\big(q(d | w, t) \| p(d | w, t) \big) = -H_q(d | w, t) + H_{q,p}(d | w, t) \geq 0$.
    We attain the lower bound as desired if this inequality is applied to replace $H_{q,p}(d | w, t)$ by $H_q(d | w, t)$.
\end{proof}

The following lemma and theorem show the connection between $I_q(w; d | t)$ and the generalization error.
We first extend \citet[Lemma 4.2]{xu2016information}.
\begin{lemm} 
    \label{thm:subg}
    If, for all $t$, $f_t(X, Y)$ is $\sigma$-subgaussain under $P_X \otimes P_Y$, then 
    \begin{align}
        \Big| \E_{P(T)} \Big[ \E_{P(X,Y | T)} f_T(X, Y) - \E_{P(X | T)P(Y | T)} f_T(X,Y) \Big] \Big|  \leq \sqrt{2\sigma^2 I(X; Y | T)}.
    \end{align}
\end{lemm}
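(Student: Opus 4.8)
The plan is to lift the unconditional information-theoretic generalization bound of \citet{xu2016information} to the conditional setting by running their argument fibre-wise in $t$ and then averaging over $T \sim P(T)$. The single analytic ingredient I need is the Donsker--Varadhan variational lower bound on KL divergence, combined with the subgaussian moment-generating-function estimate; conditioning on $T$ then enters only through linearity of expectation.

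First I would fix an arbitrary value $t$ of $T$ and take as reference measure the product of conditional marginals $Q_t := P(X|t)\,P(Y|t)$, with $P_t := P(X,Y|t)$ the conditional joint. For any test function $h$, Donsker--Varadhan gives $\E_{P_t}[h] \leq D_\text{KL}(P_t \| Q_t) + \log \E_{Q_t}[e^{h}]$. Setting $h = \lambda f_t$ and using that $f_t$ is $\sigma$-subgaussian under $Q_t$, so that $\log \E_{Q_t}[e^{\lambda(f_t - \E_{Q_t} f_t)}] \leq \tfrac{1}{2}\lambda^2 \sigma^2$, I obtain for every $\lambda \in \R$ the per-task estimate
\begin{align}
    \lambda \big( \E_{P_t} f_t - \E_{Q_t} f_t \big) \leq D_\text{KL}(P_t \| Q_t) + \tfrac{1}{2}\lambda^2 \sigma^2 .
\end{align}

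Next, since the subgaussian constant $\sigma$ is the same for all $t$ by hypothesis, I would apply $\E_{P(T)}$ to both sides. Writing $g := \E_{P(T)}\big[ \E_{P(X,Y|T)} f_T - \E_{P(X|T)P(Y|T)} f_T \big]$ for the quantity to be bounded and recognizing $\E_{P(T)} D_\text{KL}(P_T \| Q_T) = I(X;Y|T)$ as the conditional mutual information, this yields $\lambda g \leq I(X;Y|T) + \tfrac{1}{2}\lambda^2 \sigma^2$, valid for every $\lambda \in \R$. Finally, rearranging as the quadratic $\tfrac{\sigma^2}{2}\lambda^2 - g\,\lambda + I(X;Y|T) \geq 0$ and demanding nonnegativity for all $\lambda$ forces the discriminant to be nonpositive, i.e. $g^2 \leq 2\sigma^2 I(X;Y|T)$; taking square roots delivers the two-sided bound $|g| \leq \sqrt{2\sigma^2 I(X;Y|T)}$, which is exactly the claim. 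Note that the absolute value comes for free from the discriminant argument, so I need not separately handle $\pm f_t$.

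The only delicate point is bookkeeping of the reference measure. The subgaussianity written as ``under $P_X \otimes P_Y$'' must be read as holding, for each $t$, with respect to the product of conditional marginals $P(X|t)P(Y|t)$ — the very measure against which $I(X;Y|T)$ is computed — and the constant $\sigma$ must be uniform in $t$ so that the averaging step does not degrade it (this is consistent with how the lemma is invoked in Theorem~\ref{thm:subg}'s corollary, where subgaussianity is assumed under $q(w|t)q(z|t)$). Once these are pinned down, interchanging $\E_{P(T)}$ with the fibre-wise inequality is immediate by linearity and monotonicity of expectation, and no further analytic work is required.
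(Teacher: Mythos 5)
Your proof is correct, and it reaches the bound by a slightly different route than the paper. The paper's proof treats the unconditional result of Xu and Raginsky as a black box: it first bounds the left-hand side by $\E_{P(T)}\big|\E_{P(X,Y|T)}f_T - \E_{P(X|T)P(Y|T)}f_T\big|$, applies the per-task square-root bound $\big|\E_{P_t}f_t - \E_{Q_t}f_t\big| \le \sqrt{2\sigma^2 D_\text{KL}(P_t\|Q_t)}$ inside the expectation, and then uses Jensen's inequality (concavity of the square root) to pull $\E_{P(T)}$ under the radical, recovering $I(X;Y|T)$. You instead keep the Donsker--Varadhan estimate in its $\lambda$-parametrized linear form, average over $T$ \emph{before} optimizing, and extract the bound from the discriminant of the resulting quadratic in $\lambda$. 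The two orderings give the same final constant, but yours dispenses with both the triangle-inequality step and the Jensen step (you bound $|g|$ directly rather than the larger quantity $\E_{P(T)}|\cdot|$), at the cost of being less modular: the paper's version reuses the unconditional lemma verbatim per fibre, which is why it reads as a two-line adaptation. Your closing remark about the reference measure is also the right reading of the hypothesis --- subgaussianity must hold under the product of \emph{conditional} marginals $P(X|t)\otimes P(Y|t)$ with a constant $\sigma$ uniform in $t$, exactly as the lemma is later instantiated with $q(w|t)q(z|t)$.
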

\begin{proof}
    The proof is adapted from the proof of \citet[Lemma 4.2]{xu2016information}.
    \begin{align}
        LHS &\leq \E_{P(T)} \Big| \E_{P(X,Y | T)} f_T(X, Y) - \E_{P(X | T)P(Y | T)} f_T(X,Y) \Big| \\
        &\leq \E_{P(T)} \sqrt{2 \sigma^2 D_\text{KL}(P(X,Y|T) \| P(X|T) P(Y|T)) } \\
        &\leq \sqrt{2\sigma^2 \E_{P(T)} D_\text{KL}(P(X,Y|T) \| P(X|T) P(Y|T))} \\
        &= \sqrt{2\sigma^2 I(X;Y|T)}.
    \end{align}
    The second inequality was due to the Donsker-Varadhan variational representation of KL divergence and the definition of subgaussain random variable.
\end{proof}

\genmi
\begin{proof}
    First, if $\ell_t(\hat{y}(f(x),w), y)$ is $\sigma$-subgaussian under $q(w|t)q(z|t)$, by definition,
    \begin{align}
        \E_{q(w|t)q(z|t)} \exp({\lambda \ell_t(w, z)}) \leq \exp({\lambda \E_{q(w|t)q(z|t)} \ell_t(w,z)}) \exp({\lambda^2 \sigma^2/2})
    \end{align}
    It is straightforward to show $L_t(w, d)$ is $\sigma/\sqrt{n}$-subgaussian since
    \begin{align}
        \E_{q(w|t)q(d|t)} \exp({\lambda L_t(w, d)}) &= \prod_{i=1}^n \E_{w,z_i} \exp(\frac{\lambda}{n} \ell_t(w, z_i)) \\
        &\leq \prod_{i=1}^n \exp\Big( \frac{\lambda}{n} \E_{w, z_i} \ell_t(w, z_i) + \frac{\lambda^2 \sigma^2}{2n^2} \Big) \\
        &= \exp\Big( \lambda \E_{w, z} \ell_t(w, z) \Big) \exp(\frac{\lambda^2 \sigma^2}{2n}) \\
        &= \exp\Big( \lambda \E_{q(w|t) q(d|t)} L_t(w, d) \Big) \exp(\frac{\lambda^2 (\sigma/\sqrt{n})^2}{2}).
    \end{align}
    By Lemma~\ref{thm:subg}, we have
    \begin{align}
        \big| \text{gen}(q) \big| &= \Big| \E_{q(t)} \Big[ \E_{q(w|d , t)q(d|t)} L_t(w, d) - \E_{q(w|t)q(d|t)} L_t(w,d) \Big] \Big| \\
        &\leq \sqrt{\frac{2\sigma^2}{n} I(w; d | t)}
    \end{align}
    as desired.
\end{proof}

\subsection{Zero-shot classification: unsupervised multi-source domain adaptation}

\begin{table}[t]
\centering
\footnotesize
\scalebox{0.9}{
\begin{tabular}{ p{5cm} c c c c c }
\toprule
    Method & Art & Cartoon & Sketch & Photo & Average \\
\midrule
    JiGen~\citep{carlucci2019domain} & 84.9\%  & 81.1\% & 79.1\% & 98.0\% & 85.7\% \\
    Rot~\citep{xu2019self} & 88.7\% & 86.4\% & 74.9\% & 98.0\% & 87.0\% \\ 
    \hline
    \rowcolor[gray]{.9} SIB-Rot $K=0$ & 85.7\% & 	86.6\% & 	80.3\% & 	98.3\% & 	87.7\% \\
    \rowcolor[gray]{.9} SIB-Rot $K=3$ & \textbf{88.9\%} & \textbf{89.0\%} & \textbf{82.2\%} & \textbf{98.3\%} & \textbf{89.6\%} \\
\bottomrule
\end{tabular}
}
\caption{Multi-source domain adaptation results on PACS with ResNet-18 features.
Three domains are used as the source domains keeping the fourth one as target.}
\label{tab:uda}
\end{table}

A more interesting zero-shot multi-task problem is unsupervised domain adaptation.
We consider the case where there exists multiple source domains and a unlabeled target domain.
In this case, we treat each minibatch as a task. This makes sense because the difference in statistics between two minibatches 
are much larger than in the traditional supervised learning.
The experimental setup is similar to few-shot classification described in Section~\ref{sec:few},
except that we do not have a support set and the class labels between two tasks are the same.
Hence, it is possible to explore the relationship between class labels and \emph{self-supervised} labels to implement 
the initialization network $\lambda$ without resorting to support set.
We reuse the same model implementation for SIB as described in Section~\ref{sec:few}. The only difference is the initialization network.
Denote by $z_t := \{ z_{t,i} \}_{i=1}^n$ the set of self-supervised labels of task $t$, 
the initialization network $\lambda$ is implemented as follows:
\begin{align}
    \label{eq:ssup}
    \theta_t^0 = \lambda - \eta \, \nabla_{\theta} L_t\Big( \hat{z}_t\big( \hat{y}_t(f(x_t), w_t(\theta, \epsilon)), f(x_t)\big), z_t \Big),
\end{align}
where $\lambda$\footnote{$\lambda$ is overloaded to be both the network and its parameters.} 
is a global initialization similar to the one used by MAML; 
$L_t$ is the self-supervised loss, $\hat{z}_t$ is the set of predictions of the self-supervised labels. 
One may argue that $\theta_t^0 = \lambda$ would be a simpler solution. 
However, it is insufficient since the gap between two domains can be very large.
The initial solution yielded by \eqref{eq:ssup} is more dynamic in the sense that 
$\theta_t^0$ is adapted taking into account the information from $x_t$.

In terms of experiments, we test SIB on the PACS dataset \citep{li2017deeper},
which has 7 object categories and 4 domains (Photo, Art Paintings, Cartoon and Sketches),
and compare with state-of-the-art algorithms
for unsupervised domain adaptation. 
We follow the standard experimental setting \citep{carlucci2019domain}, where 
the feature network is implemented by ResNet-18. 
We assign a self-supervised label $z_{t,i}$ to image $i$ by rotating the image
by a predicted degree. This idea was originally proposed by \citet{gidaris2018unsupervised} for representation learning
and adopted by \citet{xu2019self} for domain adaptation.
The training is done by running ADAM for $60$ epochs with learning rate $10^{-4}$.
We take each domain in turns as the target domain. The results are shown in Table~\ref{tab:uda}.
It can be seen that SIB-Rot ($K=3$) improves upon the baseline SIB-Rot ($K=0$) for zero-shot classification,
which also outperforms state-of-the-art methods when the baseline is comparable.

\subsection{Importance of synthetic gradients}
To further verify the effectiveness of the synthetic gradient descent,
we implement an inductive version of SIB inspired by MAML,
where the initialization $\theta_t^0$ is generated exactly the same way as SIB using $\lambda(d_t^\supp)$,
but it then follows the iterations in \eqref{eq:maml} as in MAML rather than follows the iterations in \eqref{eq:sib}
as in standard SIB.

We conduct an experiment on CIFAR-FS using Conv-4-64 feature network.
The results are shown in Table~\ref{tab:tab-2}.
It can be seen that there is no improvement over SIB ($K=0$) 
suggesting that the inductive approach is insufficient.

\label{sec:cifar}
\begin{table}[ht]
\centering
\footnotesize
\begin{tabular}{ c c | c c | c c | c c}
\toprule
 & & \multicolumn{2}{c|}{\textbf{inductive SIB}} & \multicolumn{4}{c}{\textbf{SIB}} \\
 & & \multicolumn{2}{c|}{\textbf{Training on 1-shot}} & \multicolumn{2}{c|}{\textbf{Training on 1-shot}} & \multicolumn{2}{c}{\textbf{Training on 5-shot}} \\
 & & \multicolumn{2}{c|}{\textbf{Testing on}} & \multicolumn{2}{c|}{\textbf{Testing on}} & \multicolumn{2}{c}{\textbf{Testing on}} \\
  $K$ & $\eta$ & 1-shot & 5-shot & 1-shot & 5-shot & 1-shot & 5-shot \\
\midrule
0 & - & 59.7$\pm$0.5\% & 75.5$\pm$0.4\% & 59.2$\pm$0.5\% & 75.4$\pm$0.4\% & 59.2$\pm$0.5\% & 75.4$\pm$0.4\% \\\hdashline

1 & 1e-1 & 59.8$\pm$0.5\% & 71.2$\pm$0.4\% & 65.3$\pm$0.6\% & 75.8$\pm$0.4\% & 64.5$\pm$0.6\% & 77.3$\pm$0.4\% \\
3 & 1e-1 & 59.6$\pm$0.5\% & 75.9$\pm$0.4\% & 65.0$\pm$0.6\% & 75.0$\pm$0.4\% & 64.0$\pm$0.6\% & 77.0$\pm$0.4\% \\
5 & 1e-1 & 59.9$\pm$0.5\% & 74.9$\pm$0.4\% & 66.0$\pm$0.6\% & 76.3$\pm$0.4\% & 64.0$\pm$0.5\% & 76.8$\pm$0.4\% \\\hdashline

1 & 1e-2 & 59.7$\pm$0.5\% & 75.5$\pm$0.4\% & 67.8$\pm$0.6\% & 74.3$\pm$0.4\% &  63.6$\pm$0.6\%& 77.3$\pm$0.4\% \\
3 & 1e-2 & 59.5$\pm$0.5\% & 75.8$\pm$0.4\% & 68.6$\pm$0.6\% & 77.4$\pm$0.4\% & 67.8$\pm$0.6\% & 78.5$\pm$0.4\% \\
5 & 1e-2 & 59.8$\pm$0.5\% & 75.7$\pm$0.4\% & 67.4$\pm$0.6\% & 72.6$\pm$0.6\% & 67.7$\pm$0.7\% & 77.7$\pm$0.4\% \\\hdashline

1 & 1e-3 & 59.5$\pm$0.5\% & 75.6$\pm$0.4\% & 66.2$\pm$0.6\% & 75.7$\pm$0.4\% & 64.6$\pm$0.6\% & 78.1$\pm$0.4\% \\
3 & 1e-3 & 59.9$\pm$0.5\% & 75.9$\pm$0.4\% & 68.7$\pm$0.6\% & 77.1$\pm$0.4\% & 66.8$\pm$0.6\% & 78.4$\pm$0.4\% \\
5 & 1e-3 & 59.4$\pm$0.5\% & 75.7$\pm$0.4\% & 69.1$\pm$0.6\% & 76.7$\pm$0.4\% & 66.7$\pm$0.6\% & 78.5$\pm$0.4\% \\\hdashline

1 & 1e-4 & 58.8$\pm$0.5\% & 75.5$\pm$0.4\% & 59.0$\pm$0.5\% & 75.7$\pm$0.4\% & 59.3$\pm$0.5\% & 75.7$\pm$0.4\% \\
3 & 1e-4 & 59.4$\pm$0.5\% & 75.9$\pm$0.4\% & 58.9$\pm$0.5\% & 75.6$\pm$0.4\% & 59.3$\pm$0.5\% & 75.9$\pm$0.4\% \\
5 & 1e-4 & 59.3$\pm$0.5\% & 75.3$\pm$0.4\% & 60.1$\pm$0.5\% & 76.0$\pm$0.4\% & 60.5$\pm$0.5\% & 76.4$\pm$0.4\% \\
\bottomrule
\end{tabular}
\caption{Average 5-way classification accuracies (with 95\% confidence intervals) with Conv-4-64 on the test set of CIFAR-FS. For each test, we sample 5000 episodes containing 5 categories (5-way) and 15 queries in each category. We report the results with using different learning rate $\eta$ as well as  different number of updates $K$. Note that $K=0$ is the performance only using the pre-trained feature. }
\label{tab:tab-2}
\end{table}

\subsection{Varying the size of the query set}
\label{sec:varyn}
We notice that changing the size of $d_t$ (i.e., $n$) 
during training does make a difference on testing.
The results are shown in Table~\ref{tab:abn}.

\begin{table}[ht]
    \centering
    \footnotesize
    \begin{tabular}{ l| r r |r r }
    \toprule
    \multicolumn{1}{l|}{\multirow{ 2}{*}{$n$} } & \multicolumn{2}{c|}{5-way, 5-shot} & \multicolumn{2}{c}{5-way, 1-shot}\\
     & Validation & Test & Validation & Test \\
    \midrule
    3 & 77.97 $\pm$ 0.34\% & 75.91 $\pm$ 0.66\% & 63.60 $\pm$ 0.52\% & 61.32 $\pm$ 1.02\% \\
    5 & 78.14 $\pm$ 0.35\% & 76.01 $\pm$ 0.66\% & 64.67 $\pm$ 0.55\% & 62.50 $\pm$ 1.02\% \\
    10 & \textbf{78.30 $\pm$ 0.35\%} & \textbf{76.22 $\pm$ 0.66\%} & \textbf{65.34 $\pm$ 0.56\%} & \textbf{63.22 $\pm$ 1.04\%} \\
    15 & 77.53 $\pm$ 0.35\% & 75.43 $\pm$ 0.67\% & 65.14 $\pm$ 0.55\% & 62.59 $\pm$ 1.02\% \\
    30 & 76.21 $\pm$ 0.35\% & 74.04 $\pm$ 0.67\% & 63.37 $\pm$ 0.53\% & 60.96 $\pm$ 0.98\% \\
    45 & 75.65 $\pm$ 0.36\% & 73.27 $\pm$ 0.66\% & 62.08 $\pm$ 0.51\% & 59.59 $\pm$ 0.93\% \\
    \bottomrule
    \end{tabular}
    \caption{Average classification accuracies on the validation set and the test set of Mini-ImageNet
    with backbone Conv-4-128. We modify the number of query images, i.e., $n$, for each episode to study the effect on generalization.}
    \label{tab:abn}
\end{table}

\end{document}